\documentclass{ecai}
\usepackage{times}
\usepackage{graphicx,xcolor}
\usepackage{latexsym}
\usepackage{amsmath,amsfonts,amsthm,amssymb}
\usepackage{subfigure}
\usepackage{algorithm,algorithmic}


\newtheorem{thm}{Theorem}
\newtheorem{proposition}{Proposition}
\newtheorem{lemma}{Lemma}

\begin{document}

\title{Coulomb Autoencoders}

\author{Emanuele Sansone\institute{Huawei Noah's Ark Lab, UK , London, email: emanuele.sansone@huawei.com} \and Hafiz Tiomoko Ali\institute{Huawei Noah's Ark Lab, UK , London} \and Jiacheng Sun\institute{Huawei Noah's Ark Lab, China, Shenzhen}}

\maketitle

\begin{abstract}
Learning the true density in high-dimensional feature spaces is a well-known problem in machine learning. In this work, we consider generative autoencoders based on maximum-mean discrepancy (MMD) and provide theoretical insights. In particular, (i) we prove that MMD coupled with Coulomb kernels has optimal convergence properties, which are similar to convex functionals, thus improving the training of autoencoders, and (ii) we provide a probabilistic bound on the generalization performance, highlighting some fundamental conditions to achieve better generalization.
We validate the theory on synthetic examples and on the popular dataset of celebrities’ faces, showing that our model, called Coulomb autoencoders, outperform the state-of-the-art.
\end{abstract}

\allowdisplaybreaks

{\section{Introduction}
\label{sec:intro}}
Deep generative models, like generative adversarial networks (GANs) and autoencoder-based models, represent the most promising research directions to learn the underlying density of data. Each of these families have their own limitations. On one hand, generative adversarial networks are difficult to train due to the mini-max nature of the optimization problem. On the other hand, autoencoder-based models, while more stable to train, often produce samples of lower quality compared to GANs. In this work, we attempt to address the issues of generative autoencoders.

Learning the unkwnown density in autoencoders requires to minimize two terms, namely the error between the input data and their reconstructed version, together with a distance between a prior density and the density induced by the encoder function. Note that by choosing different distances, we obtain different families of autoencoders. For example, when using the Kullback-Leibler divergence (KL), the corresponding models are variational autoencoders (VAEs)~\cite{kingma2014vae,rezende2014stochastic}, while when choosing the maximum-mean discrepancy (MMD), we obtain Wasserstein autoencoders (WAEs)~\cite{tolstikhin2018wasserstein}.
The main advantage of WAEs over VAEs is that MMD allows using encoders with deterministic outputs, while, by definition, KL requires only encoders with stochastic outputs. In fact, the stochastic encoder in VAEs is driven to produce latent representations that can be similar among different input samples, thus generating conflicts during reconstruction~\cite{tolstikhin2018wasserstein}, while the deterministic encoder in WAEs is driven to learn latent representations that are different for different input samples. Therefore, MMD should be preferred over KL, when using deterministic encoders. 
This work focuses on MMD-based autoencoders and provides two theoretical insights. Regarding the first contribution, we study the critical points of MMD coupled with Coulomb kernels and show that all local extrema are global and that the set of saddle points has zero Lebesgue measure. This result is particularly interesting from the optimization perspective, as MMD coupled with Coulomb kernels can be maximized/minimized through local-search algorithms (like gradient descent), without being trapped into local minima or saddle points. In the context of autoencoders, using MMD with Coulomb kernels allows to mitigate the problem of local minima and achieve better generalization performance, as validated through experiments on synthetic and real-world datasets. Regarding the second contribution, we provide a probabilistic bound on the generalization performance for MMD-autoencoders, highlighting the fact that the reconstruction error is crucial to achieve better generalization and that architecture design is the most important aspect to control it.

The rest of the paper is organized as follows. In Section~\ref{sec:theory}, we provide the two theoretical results. In Section~\ref{sec:related}, we review the literature of recent generative models. Finally we discuss the experiments in Section~\ref{sec:experiments}.

{\section{Formulation and Theoretical Analysis}\label{sec:theory}}
This section deals with the problem of density estimation. The goal is to estimate 
the unknown density function $p_\mathbf{X}(\mathbf{x})$, whose support is defined by 
$\Omega_{\mathbf{x}}\subset\mathbb{R}^d$. 


We consider two continuous functions $f:\Omega_{\mathbf{x}}\rightarrow\Omega_{\mathbf{z}}$ and 
$g:\Omega_{\mathbf{z}}\rightarrow\Omega_{\mathbf{x}}$, where
$\Omega_{\mathbf{z}}\subseteq\mathbb{R}^h$ with $h$ equal to the intrinsic dimensionality
of $\Omega_{\mathbf{x}}$. Furthermore, we consider that $g(f(\mathbf{x}))=\mathbf{x}$ for 
every $\mathbf{x}\in\Omega_{\mathbf{x}}$, namely that $g$ is the left inverse for $f$ on
domain $\Omega_{\mathbf{x}}$. In this work, $f$ and $g$ are neural networks parameterized
by vectors $\boldsymbol{\theta}$ and $\boldsymbol{\gamma}$, respectively. $f$ is called the
encoding function, taking a random input $\mathbf{x}$ with density $p_{\mathbf{X}}(\mathbf{x})$
and producing a random vector $\mathbf{z}$ with density $q_{\mathbf{Z}}(\mathbf{z})$, while
$g$ is the decoding function taking $\mathbf{z}$ as input and producing the random vector 
$\mathbf{y}$ distributed according to $q_{\mathbf{Y}}(\mathbf{y})$. Note that, 
$p_{\mathbf{X}}(\mathbf{x})=q_{\mathbf{Y}}(\mathbf{y})$, since $\mathbf{y}=g(\mathbf{z})=
g(f(\mathbf{x}))=\mathbf{x}$ for every $\mathbf{x}\in\Omega_{\mathbf{x}}$. This is already
a density estimator, but it has the drawback that in general $q_{\mathbf{Z}}(\mathbf{z})$
cannot be written in closed form. Now, define 
$p_{\mathbf{Z}}(\mathbf{z})$ an arbitrary density with support $\Omega_{\mathbf{z}}$, that
has a closed form.\footnote{In this work we consider $p_{\mathbf{Z}}(\mathbf{z})$ as a
standard multivariate Gaussian density.} 
Our goal is to guarantee that $q_{\mathbf{Z}}(\mathbf{z})=p_{\mathbf{Z}}(\mathbf{z})$
on the whole support, while maintaining $g(f(\mathbf{x}))=\mathbf{x}$ for every 
$\mathbf{x}\in\Omega_{\mathbf{x}}$. This allows us to use the decoding function as a generator
and produce samples distributed according to $p_{\mathbf{X}}(\mathbf{x})$. Therefore, the problem 
of density estimation in a high-dimensional feature space is converted into a problem of estimation 
in a lower dimensional vector space, thus overcoming the curse of dimensionality.

The objective of our minimization problem is defined as follows:
\begin{align}
    \mathcal{L}(f,g)=&\int_{\Omega_{\mathbf{x}}}\|\mathbf{x}-g(f(\mathbf{x}))\|^2
    p_\mathbf{X}(\mathbf{x})d\mathbf{x} \nonumber\\
    &+\lambda\int_{\Omega_{\mathbf{z}}}\int_{\Omega_{\mathbf{z}}}
    \phi(\mathbf{z})\phi(\mathbf{z}')k(\mathbf{z},\mathbf{z}')d\mathbf{z}d\mathbf{z}'
    \label{eq:objective}
\end{align}
where $\phi(\mathbf{z}){=}p_{\mathbf{Z}}(\mathbf{z}){-}q_{\mathbf{Z}}(\mathbf{z})$,
$k(\cdot,\cdot)$ is a kernel function and $\lambda$ is a positive scalar hyperparameter weighting the two addends.
Note that the first term in~(\ref{eq:objective}) reaches its global minimum
when the encoding and the decoding functions are invertible 
on support $\Omega_{\mathbf{x}}$, while
the second term in~(\ref{eq:objective}) is globally optimal  
when $q_{\mathbf{Z}}(\mathbf{z})$ equals $p_{\mathbf{Z}}(\mathbf{z})$
(see the supplementary material for a recall of its properties). 
Therefore, the global minimum of~(\ref{eq:objective}) satisfies our initial 
requirements and the optimal solution corresponds to the case 
where $q_{\mathbf{Y}}(\mathbf{y})=p_\mathbf{X}(\mathbf{x})$.

\begin{figure}[t!]
    \centering
        \subfigure[Gaussian]{\includegraphics[width=0.23\linewidth]{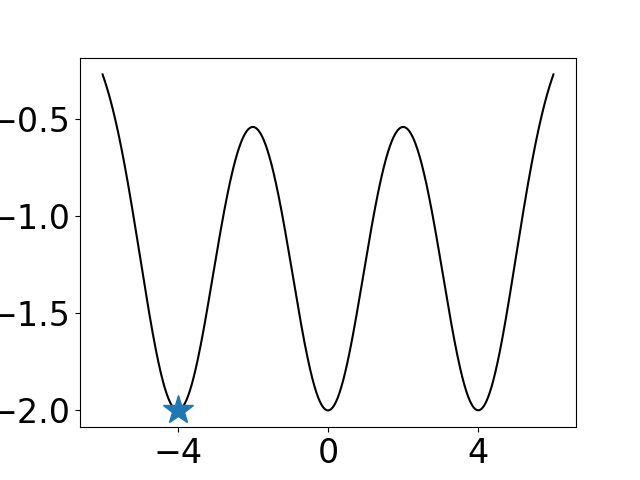}}
        \subfigure[IMQ]{\includegraphics[width=0.23\linewidth]{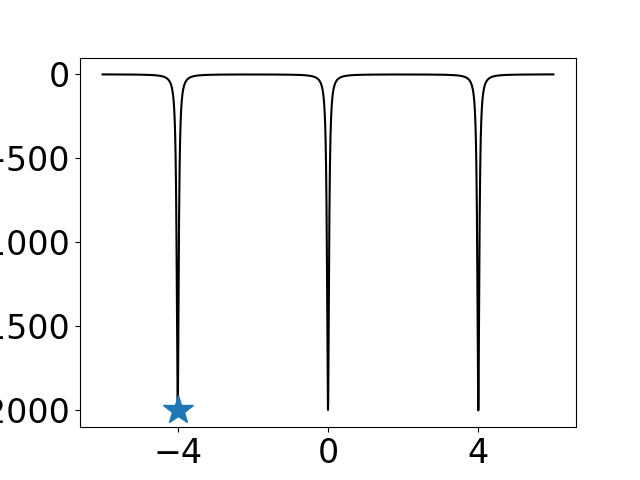}}
        \subfigure[Coulomb]{\includegraphics[width=0.23\linewidth]{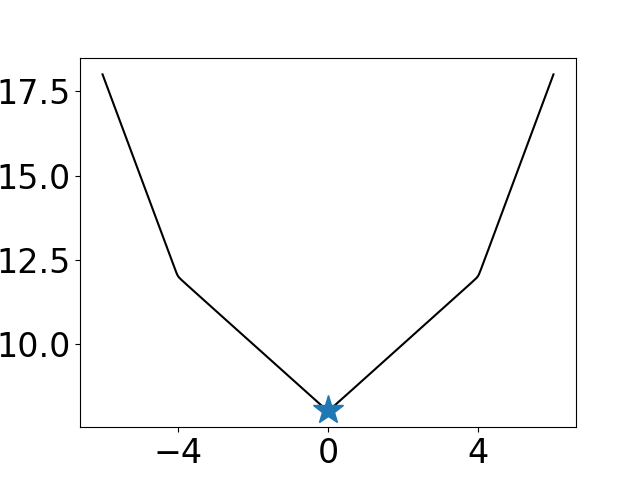}}
        \subfigure[Solution]{\includegraphics[width=0.23\linewidth]{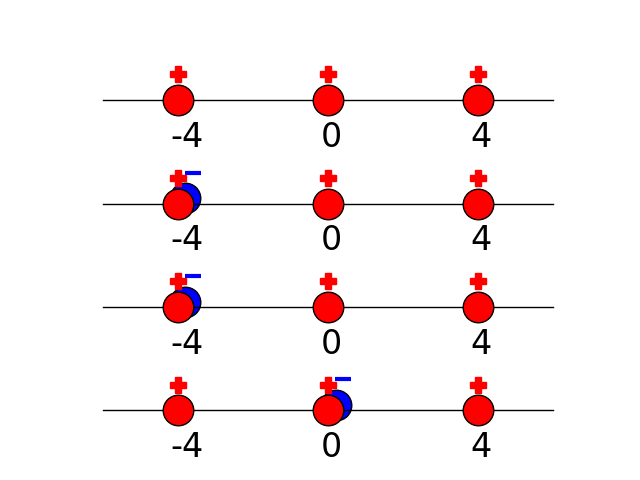}}\\
        \subfigure[Gaussian]{\includegraphics[width=0.23\linewidth]{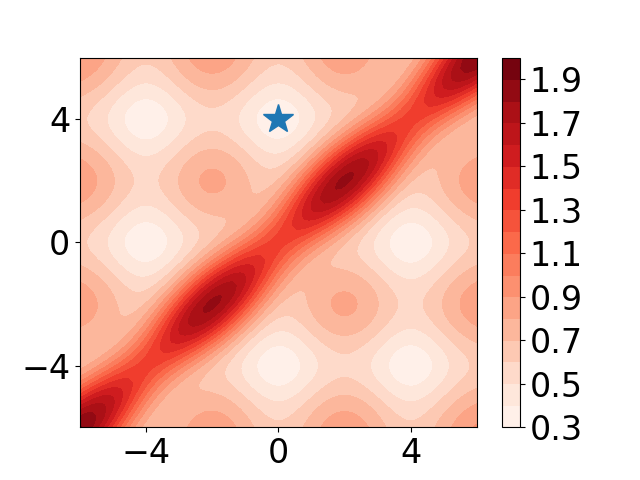}}
        \subfigure[IMQ]{\includegraphics[width=0.23\linewidth]{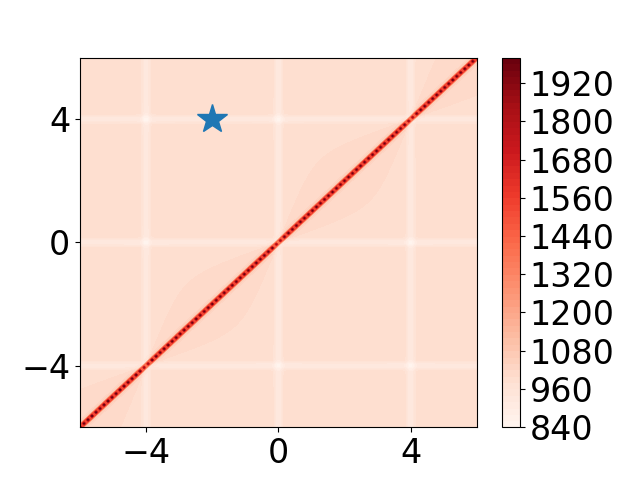}}
        \subfigure[Coulomb]{\includegraphics[width=0.23\linewidth]{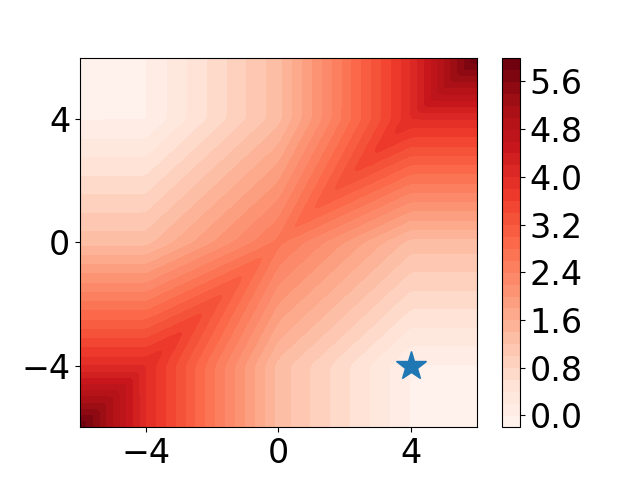}}
        \subfigure[Solution]{\includegraphics[width=0.23\linewidth]{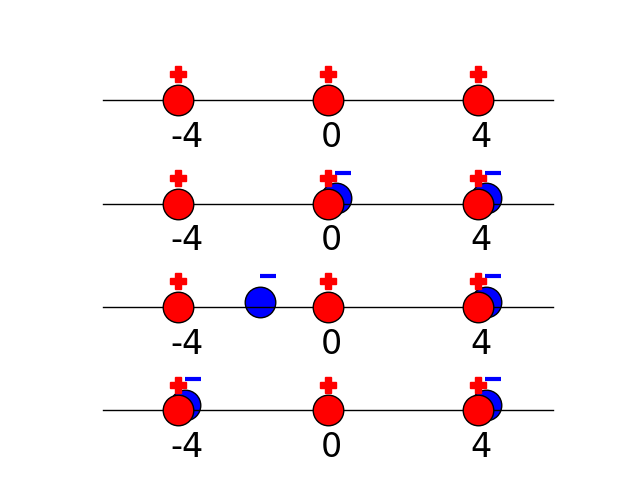}}
    \caption{Monodimensional cases with single ((a)-(d)) and pair of negative charged particles ((e)-(h))). (a-c) and (e-g) are the plots of the regularizer in~(\ref{eq:objective}) over different locations of the negative particles for the Gaussian, the inverse quadratic and the Coulomb kernels, respectively. (d) and (h) show possible minimizers (for the respective kernels).}
    \label{fig:demo}
\end{figure}

\subsection{Convergence properties}
The integrals in~(\ref{eq:objective}) cannot be
computed exactly since $p_\mathbf{X}(\mathbf{x})$ is unknown and $q_{\mathbf{Z}}(\mathbf{z})$
is not defined explicitly. As a consequence, we use the unbiased estimate
of~(\ref{eq:objective}) as a surrogate for optimization, namely:
\begin{align}
    \hat{\mathcal{L}}(f,g){=}& 
    \sum_{\mathbf{x}_i{\in}\mathcal{D}_\mathbf{x}}\frac{\|\mathbf{x}_i{-}g(f(\mathbf{x}_i))\|^2}{N} 
    +\lambda \Bigg\{\frac{1}{N(N{-}1)}\sum_{\substack{\mathbf{z}_i,\mathbf{z}_j{\in}\mathcal{D}_\mathbf{z}\\j\neq i}}
    k_{i,j} \nonumber\\ 
    &-\frac{2}{N^2}\sum_{\mathbf{z}_i{\in}\mathcal{D}_\mathbf{z}}
    \sum_{\substack{\mathbf{z}_j{\in}\mathcal{D}_\mathbf{z}^f}}k_{i,j}
    +\frac{1}{N(N{-}1)}\sum_{\substack{\mathbf{z}_i,\mathbf{z}_j{\in}\mathcal{D}_\mathbf{z}^f\\j\neq i}}
    k_{i,j}\Bigg\}
    \label{eq:empobjective}
\end{align}
where $k_{i,j}\doteq k(\mathbf{z}_i,\mathbf{z}_j)$ and 
$\mathcal{D}_\mathbf{x}{=}\{\mathbf{x}_i\}_{i=1}^{N}$,
$\mathcal{D}_\mathbf{z}{=}\{\mathbf{z}_i\}_{i=1}^{N}$ and
$\mathcal{D}_\mathbf{z}^f{=}\{f(\mathbf{x}_i)\}_{i=1}^{N}$ are 
three finite set of samples drawn from $p_\mathbf{X}(\mathbf{x})$, 
$p_{\mathbf{Z}}(\mathbf{z})$ and $q_{\mathbf{Z}}(\mathbf{z})$, respectively.

Note that the MMD term, corresponding to the last three addends in~(\ref{eq:empobjective}), is not convex in the set of unknowns $\mathcal{D}_\mathbf{z}^f$. This means that it is not possible in general to ensure convergence to the global minimum. Nevertheless, we can prove that, for a specific family of kernels, called Coulomb kernels~\cite{hochreiter2005optimal,unterthiner2017coulomb}, this property can be achieved. In fact,
\begin{thm}
\label{th:mmd}
Assume that 
\begin{enumerate}
	\item $N > h$.
	\item $\forall \mathbf{z}_i, \mathbf{z}_j \in \mathcal{D}_\mathbf{z}, $ $\mathbf{z}_i \neq \mathbf{z}_j$
	\item The kernel function satisfies the Poisson's equation, namely $-\nabla_\mathbf{z}^2k(\mathbf{z},\mathbf{z}')=\lambda\delta(\mathbf{z}-\mathbf{z}'),\quad 
    \forall \mathbf{z},\mathbf{z}'\in\mathbb{R}^h$. And the solution can be written in the following form
    \begin{equation}
        k(\mathbf{z},\mathbf{z}')=\left\{\begin{array}{ll}
            -\frac{\lambda}{2\pi}\ln\|\mathbf{z}-\mathbf{z}'\| & h=2 \\
            \frac{\lambda}{\beta\mathcal{S}_{h}\|\mathbf{z}-\mathbf{z}'\|^\beta} & \beta=h-2, h>2
        \end{array}\right.
        \label{eq:coulomb}
    \end{equation}
    where $\mathcal{S}_{h}$ is the surface area of a $h$-dimensional unit ball.
\end{enumerate}
Then, the MMD term in~(\ref{eq:empobjective}) satisfies the following general properties:
\begin{enumerate}
	\item all local extrema are global.
	\item the set of saddle points have zero Lebesgue measure.
\end{enumerate}
Furthermore, the set of all global minima is finite and consists of all
possible permutations of the elements in $\mathcal{D}_\mathbf{z}$. In other words,  $\mathcal{D}_\mathbf{z}^f=\mathcal{D}_\mathbf{z}$.
\end{thm}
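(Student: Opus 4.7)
The plan is to interpret the second term of (2) as an electrostatic interaction energy between $N$ positive point charges located at $\mathbf{z}_i \in \mathcal{D}_\mathbf{z}$ and $N$ negative point charges at $\mathbf{y}_i = f(\mathbf{x}_i) \in \mathcal{D}_\mathbf{z}^f$, and then to exploit the harmonicity of the Coulomb potential away from its singularities. First I would introduce the signed point measure $\mu = \frac{1}{N}\sum_{i=1}^N (\delta_{\mathbf{z}_i} - \delta_{\mathbf{y}_i})$ and the induced potential $V(\mathbf{z}) = \int k(\mathbf{z},\mathbf{z}')\, d\mu(\mathbf{z}')$. Assumption~3 gives Poisson's equation $-\nabla^2 V = \lambda\mu$ in the distributional sense. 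Green's identity together with the decay of $V$ at infinity (a limiting argument is needed in the $h=2$ logarithmic case) then rewrites
\begin{equation*}
\mathrm{MMD}(\mathcal{D}_\mathbf{z}^f) \;=\; \int V\, d\mu \;=\; \frac{1}{\lambda}\int \|\nabla V(\mathbf{z})\|^2\, d\mathbf{z} \;\geq\; 0,
\end{equation*}
up to the constant self-energy terms that are subtracted in the unbiased estimator (2).

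From this identity, $\mathrm{MMD}=0$ iff $\nabla V\equiv 0$. Because Coulomb potentials generated by distinct point sources are linearly independent, this forces the positive and negative charges to cancel pointwise, i.e.\ $\mathcal{D}_\mathbf{z}^f=\mathcal{D}_\mathbf{z}$ as multisets. Under assumption~2 the $\mathbf{z}_i$ are pairwise distinct, so there are exactly $N!$ global minimizers, one per permutation, settling the final statement of the theorem.

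To prove that every local extremum is global, I would invoke Earnshaw's theorem. At a hypothetical non-global critical point $(\mathbf{y}_1^*,\dots,\mathbf{y}_N^*)$, fix all coordinates except $\mathbf{y}_k$ and let $\psi(\mathbf{y}_k)$ be the restricted energy. Up to constants, $\psi$ is the electrostatic potential generated at $\mathbf{y}_k$ by the remaining $2N-1$ charges, hence harmonic wherever $\mathbf{y}_k$ does not coincide with another source (the Laplacian is a sum of $\delta$-terms by Poisson's equation, all of which vanish off the diagonals). The strong maximum principle forbids strict local extrema of a non-constant harmonic function, so either $\psi$ admits a descent direction at $\mathbf{y}_k^*$ (ruling out local minima of the MMD) or $\psi$ is locally constant. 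At the Hessian level the same conclusion is visible from $\mathrm{tr}(\nabla^2_{\mathbf{y}_k}\mathrm{MMD})=0$: the per-particle Hessian is trace-free, hence either zero or indefinite, so any non-degenerate critical point is a saddle.

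Finally, the measure-zero statement follows from the fact that $\nabla_{\mathbf{y}_k}\mathrm{MMD}=0$ for $k=1,\dots,N$ is a system of $hN$ rational equations in $hN$ unknowns (the Coulomb kernel yields rational gradients with poles only on the pairwise diagonals excluded by assumption~2), so the critical set is a proper real-analytic subvariety of the configuration space and therefore Lebesgue-null, except on the discrete orbit of global minima. The main obstacle I anticipate is the degenerate branch of the Earnshaw argument where the single-particle Hessian vanishes identically: there one must bootstrap from higher-order derivatives (or from the global identity $\mathrm{MMD}=\frac{1}{\lambda}\int\|\nabla V\|^2$) to exhibit a descent direction, and one must verify that such doubly-degenerate configurations still form a lower-dimensional algebraic set. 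I expect condition~1 ($N>h$) to play its role precisely here, by overdetermining the harmonicity constraints enough to force generic non-degeneracy.
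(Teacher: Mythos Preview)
Your core strategy coincides with the paper's: compute the Laplacian of the empirical MMD in each moving variable, observe it is a sum of Dirac masses by Poisson's equation, conclude harmonicity off the pairwise diagonals, and invoke the maximum principle (your Earnshaw formulation) to forbid interior local extrema. For the saddle set, both you and the paper argue via real analyticity that the critical locus is Lebesgue-null.

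Two differences are worth noting. First, for the measure-zero claim you invoke the generic fact that a proper real-analytic variety is null; the paper instead exploits a concrete symmetry: summing the first-order conditions over $i$ yields $\sum_i F(\mathbf{x}_i)=0$ with $F(\mathbf{x})=\sum_j (\mathbf{x}-\mathbf{z}_j)/\|\mathbf{x}-\mathbf{z}_j\|^h$, and then Fubini plus the fact that the level set of a nonconstant analytic map has $h$-dimensional measure zero finishes the argument. Your route is fine but requires you to check properness; the paper's trick gives this for free. Second, your Green's-identity step $\mathrm{MMD}=\lambda^{-1}\int\|\nabla V\|^2$ is attractive for characterising the global minima, but for point charges $\int\|\nabla V\|^2$ diverges (each Dirac carries infinite self-energy), so the identity is only formal; the paper sidesteps this by not proving the ``global minima are permutations'' clause at all, merely asserting it. If you want to keep your argument, you should regularise (e.g.\ excise $\epsilon$-balls and track the renormalised energy) rather than rely on the bare identity.

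Finally, your worry about where $N>h$ enters is well placed: the paper's proof never uses it either, nor does it address the degenerate-Hessian branch you flag. So on those points you are not missing anything the paper supplies.
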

\begin{proof}
Define the MMD term as:
\begin{align}
	&\hat{\mathcal{G}}(\{\mathbf{x}_i\}_{i=1}^N,\{\mathbf{z}_i\}_{i=1}^N)\\ \nonumber
	&=\frac{1}{N(N-1)}\sum_{i=1}^N\sum_{j\neq i}k(\mathbf{x}_i,\mathbf{x}_j)+\frac{1}{N(N-1)}\sum_{i=1}^N\sum_{j\neq i}k(\mathbf{z}_i,\mathbf{z}_j)\\ \nonumber
	&-\frac{2}{N^2}\sum_{i=1}^N\sum_{j=1}^Nk(\mathbf{x}_i,\mathbf{z}_j)
\end{align}
By the definition of Poisson kernel, we get the Laplacian of $\hat{\mathcal{G}}$ as
\begin{align}
	\nabla_{\mathbf{x}_i}^2\hat{\mathcal{G}}=-\frac{2}{N(N-1)}\sum_{j\neq i}\delta(\mathbf{x}_i-\mathbf{x}_j)+\frac{2}{N^2}\sum_{j=1}^N\delta(\mathbf{x}_i-\mathbf{z}_j)
\end{align}
Thus, $\hat{\mathcal{G}}$ is harmonic except on the set $H=\{\mathbf{x}:\mathbf{x}_i \neq \mathbf{x}_j, \mathbf{x}_i \neq \mathbf{z}_j, \forall i,j =1,...,N\}$. By the Maximal Principle of harmonic functions, $\hat{\mathcal{G}}$ has no local extrema and all the extrema are global.\\
On the other hand, the saddle points of $\hat{\mathcal{G}}$ satisfy
\begin{align}
	\nabla_{\mathbf{x}_i}\hat{\mathcal{G}}=&-\frac{1}{N(N-1)}\sum_{j\neq i}\frac{\mathbf{x}_i-\mathbf{x}_j}{\|\mathbf{x}_i-\mathbf{x}_j\|^h}+ \nonumber\\
	&\frac{1}{N^2}\sum_{j=1}^N\frac{\mathbf{x}_i-\mathbf{z}_j}{\|\mathbf{x}_i-\mathbf{z}_j\|^h}=0
\end{align}
This implies that
\begin{align}
	F(\mathbf{x_i})\triangleq\sum_{j=1}^N\frac{\mathbf{x}_i-\mathbf{z}_j}{\|\mathbf{x}_i-\mathbf{z}_j\|^h}=\frac{N}{N-1}\sum_{j\neq i}\frac{\mathbf{x}_i-\mathbf{x}_j}{\|\mathbf{x}_i-\mathbf{x}_j\|^h}
\end{align}
As $\hat{\mathcal{G}}$ is harmonic except on the set $H$, $F$ is analytic except on $H$. By the symmetry of the right hand side, we have $\sum_{i=1}^{N}F(\mathbf{x}_i)=0$. Define $$A=\{(\mathbf{x}_1,...,\mathbf{x}_N):\mathbf{x}_i\in\mathbb{R}^h\backslash\{\mathbf{z}_i\}_{i=1}^N,\sum_{i=1}^{N}F(\mathbf{x}_i)=0\}$$
If $(\mathbf{x}_1,...,\mathbf{x}_N)\in A$, then $\mathbf{x}_N\in F^{-1}(-\sum_{i=1}^{N-1}F(\mathbf{x}_i))$.
By considering that $F$ is analytic and nonconstant and by using the Fubini Theorem we get that the measure of $A$ is
\begin{align}
	|A|&=\int_{(\mathbb{R}^h)^{N}}\chi_Ad\mathbf{x}_1...d\mathbf{x}_{N}\nonumber \\ 
	&=\int_{(\mathbb{R}^h)^{N-1}}(\int_{\mathbb{R}^h}\chi_Ad\mathbf{x}_{N})d\mathbf{x}_1...d\mathbf{x}_{N-1}\nonumber \\ 
	&=\int_{(\mathbb{R}^h)^{N-1}}|F^{-1}(-\sum_{i=1}^{N-1}F(\mathbf{x}_i))|d\mathbf{x}_1...d\mathbf{x}_{N-1}=0
	\label{eq:measure}
\end{align}
where $\chi_A$ is the characteristic function, equals 1 on $A$, otherwise equals 0 and $|\cdot|$ denotes the Lebesgue measure operator. The third equality in~(\ref{eq:measure}) holds because we know that for a nonconstant analytic function, its inverse image at a value is of zero measure (w.r.t h-dimensional Lebesgue measure). As the saddle point is a subset of $A$, so the set of saddle points have zero Lebesgue measure.
\end{proof}

The assumptions of the theorem are quite general in practice. In fact, the requirement $N>h$ is generally valid in applications involving autoencoders. The second assumption is valid with probability 1, as long as the elements in $\mathcal{D}_\mathbf{z}$ are drawn independently from $p_Z$.

Note that, since the set of saddle points has zero measure, optimization through local search methods can converge to global minima. This is an important characteristic which is similar to convex functionals. Another important remark is that at optimality, the sets $\mathcal{D}_\mathbf{z}^f$ and $\mathcal{D}_\mathbf{z}$ are equal, independently of the sampling from $p_Z$ and of the choice of $N$. Therefore, MMD with Coulomb kernel forces $q_Z$ to be equal to $p_Z$.

It is important to mention that Coulomb kernels represent a generalization of Coulomb's law to any $h$-dimensional Euclidean space.\footnote{In order to see this, consider that for $h=3$ the kernel function in~(\ref{eq:coulomb}) obeys exactly to the Coulomb's law.} 
Therefore, samples from $p_{\mathbf{Z}}$ and $q_{\mathbf{Z}}$ can be regarded as positive and negative 
charged particles, respectively, while the Coulomb kernels induce some global attraction and repulsion forces between them. As a consequence, 
the minimization of the regularizer in~(\ref{eq:empobjective}), with respect to the location of the negative charged particles, allows to find a configuration where the two sets of particles balance each other.
Based on this interpretation, we can highlight the differences between Coulomb and other kernels from previous work~\cite{tolstikhin2018wasserstein} using two simple mono-dimensional cases ($h=1$). The first example consists of three positive particles, located at $-4, 0$ and $4$, and a single negative particle, that is allowed to move freely. In this case, $p_Z(z)=\delta(z+4)+\delta(z)+\delta(z-4)$ and $q_Z(z)=\delta(z-z_1)$, where $z_1$ represents the variable location of the
negative particle. Figure~\ref{fig:demo}(a) and Figure~\ref{fig:demo}(b)
represent the plots of the regularizer in~(\ref{eq:objective}) evaluated at different $z_1$ for the Gaussian, the inverse multiquadratic and the Coulomb kernels, respectively.
The Gaussian and the inverse multiquadratic kernels introduce new local optima and the negative particle is attracted locally to one of the positive charges without being affected by the remaining ones. On the contrary, the Coulomb kernel has only a single minimum. This minimal configuration is the best one, if one considers that all  positive particles exert an attraction force on the negative one. 
As a result the Coulomb kernel induces \textbf{global attraction forces}. 
The second example consists of the same three positive particles and a pair of free negative charges. In this case, $q_Z(z)=\delta(z-z_1)+\delta(z-z_2)$, where $z_1, z_2$ are the locations of the two negative particles. 
Figure~\ref{fig:demo}(d) and Figure~\ref{fig:demo}(e) represent the plots of the regularizer in~(\ref{eq:objective}) evaluated at different $z_1,z_2$ for the Gaussian, the inverse multiquadratic and the Coulomb kernels, respectively. Following the same reasoning of the previous example, we conclude that the Coulomb kernel induces \textbf{global repulsion forces}.\footnote{In this case, there are a pair of minima, corresponding to the permutation of a single configuration.}

It is worth mentioning that these theoretical results are valid
when the optimization is performed on the function space, namely when
minimizing with respect to $f$ and $g$. In reality, the training is
performed on the parameter space of neural networks, which may introduce
local optima due to their non-convex nature. Solving the problem of local minima in the parameter space of neural networks is a very general problem common to deep learning approaches, which is out of the scope of this work. Our aim is to provide a principled objective function with better convergence properties with respect to existing works. 

\subsection{Generalization bound}
The following theorem provides a probabilistic bound on the estimation error between $\hat{\mathcal{L}}(f,g)$ 
and $\mathcal{L}(f,g)$ in~(\ref{eq:empobjective}).
\begin{thm}
    \label{th:bound}
    Given the objective in~(\ref{eq:empobjective}), $h>2$,
    $\Omega_{\mathbf{z}}$ a compact set,
    $\Omega_{\mathbf{x}}=[-M,M]^d$ for positive scalar $M$, and
    a symmetric, continuous and 
    positive definite kernel 
    $k:\Omega_{\mathbf{z}}\times\Omega_{\mathbf{z}}\rightarrow\mathbb{R}$, where
    $0\leq k(\mathbf{z},\mathbf{z}')\leq K$ for all $\mathbf{z},\mathbf{z}'\in\Omega_{\mathbf{z}}$
    with $K=k(\mathbf{z},\mathbf{z})$.
    If the reconstruction error $\|\mathbf{x}{-}g(f(\mathbf{x}))\|^2$ can be made small $\forall \mathbf{x}\in\Omega_{\mathbf{x}}$, such that it can be bounded by a small value $\xi$.
    
    Then, for any $ s,u,v,t>0$
    \begin{align}
        &\text{Pr}\bigg\{|\hat{\mathcal{L}}-\mathcal{L}|{>}t+\lambda (s+u+v)\bigg\}
        \leq 2\exp\bigg\{{-}\frac{2Nt^2}{\xi^2}\bigg\} \nonumber\\
        &\quad+ 2\exp\bigg\{{-}\frac{2\lfloor N/2\rfloor s^2}{K^2}\bigg\}
        + 2\exp\bigg\{{-}\frac{2\lfloor N/2\rfloor u^2}{K^2}\bigg\}\nonumber\\
        &\quad+ 2\exp\bigg\{{-}\frac{2N v^2}{K^2}\bigg\}\nonumber
    \end{align}
\end{thm}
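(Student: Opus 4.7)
The plan is to decompose $|\hat{\mathcal{L}} - \mathcal{L}|$ by the triangle inequality into four independent concentration problems, one per term in the objective, and then combine via union bound. Concretely, I would write
\begin{align*}
\hat{\mathcal{L}} - \mathcal{L} = (\hat{R} - R) + \lambda\bigl[(\hat{A} - A) + (\hat{B} - B) - 2(\hat{C} - C)\bigr],
\end{align*}
where $R$ denotes the reconstruction expectation, $A = \mathbb{E}_{\mathbf{z},\mathbf{z}'\sim p_\mathbf{Z}}[k(\mathbf{z},\mathbf{z}')]$, $B = \mathbb{E}_{\mathbf{z},\mathbf{z}'\sim q_\mathbf{Z}}[k(\mathbf{z},\mathbf{z}')]$, $C = \mathbb{E}_{\mathbf{z}\sim p_\mathbf{Z},\mathbf{z}'\sim q_\mathbf{Z}}[k(\mathbf{z},\mathbf{z}')]$, and the hatted versions are the corresponding empirical estimators appearing in (\ref{eq:empobjective}). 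The event $\{|\hat{\mathcal{L}}-\mathcal{L}|>t+\lambda(s+u+v)\}$ is then contained in the union of $\{|\hat{R}-R|>t\}$, $\{|\hat{A}-A|>s\}$, $\{|\hat{B}-B|>u\}$, and $\{2|\hat{C}-C|>v\}$, so a union bound reduces the task to four scalar concentration inequalities matching the four exponentials.

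For the reconstruction term, $\hat{R}$ is an i.i.d.\ average of $N$ variables taking values in $[0,\xi]$ by the hypothesis that the pointwise reconstruction error is bounded by $\xi$; Hoeffding's inequality gives the first $2\exp\{-2Nt^2/\xi^2\}$ term directly. For $\hat{A}$ and $\hat{B}$, these are U-statistics of order two in i.i.d.\ samples, with kernels bounded in $[0,K]$. Here I would invoke Hoeffding's classical result for U-statistics: a degree-two U-statistic can be rewritten as an average of $\lfloor N/2\rfloor$ i.i.d.\ pairwise kernel evaluations, so Hoeffding's inequality applied to that representation yields the $2\exp\{-2\lfloor N/2\rfloor s^2/K^2\}$ and $2\exp\{-2\lfloor N/2\rfloor u^2/K^2\}$ tails. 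The positivity, symmetry, and boundedness assumptions on $k$ and the compactness of $\Omega_\mathbf{z}$ justify the boundedness needed to apply this.

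The cross term $\hat{C} = \frac{1}{N^2}\sum_{i,j} k(\mathbf{z}_i, f(\mathbf{x}_j))$ is the one piece where the two independent samples interact. The cleanest route is McDiarmid's bounded-difference inequality: each of the $2N$ independent coordinates $(\mathbf{z}_1,\ldots,\mathbf{z}_N,\mathbf{x}_1,\ldots,\mathbf{x}_N)$ perturbs $\hat{C}$ by at most $K/N$, and the resulting tail combined with the factor $2$ pulled out front gives the fourth exponential $2\exp\{-2Nv^2/K^2\}$. An equivalent two-step Hoeffding argument, conditioning on $\{\mathbf{x}_j\}$ and then integrating out, would also work and may be the more transparent presentation.

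The main obstacle is the U-statistic bound on $\hat{A}$ and $\hat{B}$: one has to be careful that the pairs $(\mathbf{z}_i,\mathbf{z}_j)$ with $j\neq i$ are not independent, so a naive Hoeffding is not valid, and the $\lfloor N/2\rfloor$ factor in the exponent is precisely the price of the Hoeffding decomposition trick. The reconstruction and cross terms are essentially direct applications of Hoeffding and McDiarmid respectively, so once the decomposition is in place and the U-statistic inequality is invoked, the four bounds assemble into the claimed probability inequality via one final union bound.
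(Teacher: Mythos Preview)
Your strategy---split $\hat{\mathcal L}-\mathcal L$ into the reconstruction piece and the three MMD pieces, apply Hoeffding's inequality to the reconstruction average (range $[0,\xi]$), apply Hoeffding's U-statistic inequality with the $\lfloor N/2\rfloor$ pairing trick to the two diagonal kernel sums, treat the cross term separately, and assemble everything by a union bound---is exactly the paper's proof.

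The one technical difference is how you handle the cross term. The paper cites Hoeffding's \emph{two-sample} U-statistic bound (from the same 1963 paper, Section~5b) rather than McDiarmid. Your McDiarmid calculation does not actually reproduce the stated exponent: with $2N$ independent coordinates each perturbing $\hat C$ by at most $K/N$, the bounded-differences inequality gives
\[
\Pr\bigl(2|\hat C - C| > v\bigr) \;\le\; 2\exp\!\Bigl\{-\tfrac{Nv^2}{4K^2}\Bigr\},
\]
which is weaker than the claimed $2\exp\{-2Nv^2/K^2\}$. The two-sample U-statistic route---rewrite $\hat C$ as an average over permutations $\sigma$ of $\tfrac{1}{N}\sum_{i=1}^N k(\mathbf z_i,f(\mathbf x_{\sigma(i)}))$, an i.i.d.\ mean of $N$ terms in $[0,K]$, and apply ordinary Hoeffding to that---is the sharper tool and is what the paper invokes; swap it in for McDiarmid and the rest of your outline goes through unchanged.
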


\begin{proof}
    In order to prove the theorem, we first derive the statistical bounds for the reconstruction and the MMD terms separately, and then
    combine them to obtain the final bound.

    Consider the reconstruction error term and define $\xi_{\mathbf{x}}\doteq\|\mathbf{x}{-}g(f(\mathbf{x}))\|^2$.
    Note that $\Omega_{\mathbf{x}}=[-M,M]^d$ and therefore $\xi_{\mathbf{x}}$ is bounded in the interval $[0,4M^2d]$.
    By considering $\xi_{\mathbf{x}}$ a random variable, we can apply the Hoeffding's inequality (see Theorem 2 in~\cite{hoeffding1963prob})
    to obtain the following statistical bound:
    \begin{align}
        \text{P}_0\doteq&\text{Pr}\bigg\{\bigg|\frac{1}{N}{\sum_{\mathbf{x}{\in}\mathcal{D}_\mathbf{x}}}\xi_{\mathbf{x}}
        {-}{\int_{\Omega_{\mathbf{x}}}}\xi_{\mathbf{x}}p_{\mathbf{X}}(\mathbf{x})d\mathbf{x}\bigg|\geq 2\exp\bigg\{{-}\frac{2Nt^2}{\xi^2}\bigg\}
        \label{eq:reconbound}
    \end{align}
    where $t$ is an arbitrary small positive constant.

    We can then proceed to find the bound for the other terms in~(\ref{eq:empobjective}).
    In particular, using the one-sample and two sample U statistics in~\cite{hoeffding1963prob} 
    (see pag. 25), we obtain the following bounds:
    \begin{align}
        \text{P}_1\doteq&\text{Pr}\bigg\{\bigg|\frac{1}{N(N{-}1)}\sum_{\substack{\mathbf{z}_i,\mathbf{z}_j{\in}\mathcal{D}_\mathbf{z}\\j\neq i}}
        k_{i,j}{-}D(p_{\mathbf{Z}}, p_{\mathbf{Z}})\bigg| \nonumber\\
        &\geq s\bigg\}\leq 2\exp\bigg\{\frac{{-}2\lfloor N/2\rfloor s^2}{K^2}\bigg\}\nonumber\\            
        \text{P}_2\doteq&\text{Pr}\bigg\{\bigg|\frac{1}{N(N{-}1)}\sum_{\substack{\mathbf{z}_i,\mathbf{z}_j{\in}\mathcal{D}_\mathbf{z}^f\\j\neq i}}
        k_{i,j}{-}D(q_{\mathbf{Z}}, q_{\mathbf{Z}})\bigg| \nonumber\\
        &\geq u\bigg\}\leq  2\exp\bigg\{\frac{{-}2\lfloor N/2\rfloor u^2}{K^2}\bigg\}\nonumber\\            
        \text{P}_3\doteq&\text{Pr}\bigg\{\bigg|-\frac{2}{N^2}\sum_{\mathbf{z}_i{\in}\mathcal{D}_\mathbf{z}}\sum_{\mathbf{z}_j{\in}\mathcal{D}_\mathbf{z}^f}
        k_{i,j} +2D(p_{\mathbf{Z}}, q_{\mathbf{Z}})\bigg| \nonumber\\
        &\geq v\bigg\}\leq  2\exp\bigg\{\frac{{-}2N v^2}{K^2}\bigg\}
        \label{eq:prob}
    \end{align}
    where $D(p_{\mathbf{Z}}, q_{\mathbf{Z}})\doteq\int_{\Omega_{\mathbf{z}}}\int_{\Omega_{\mathbf{z}}}p_{\mathbf{Z}}(\mathbf{z})q_{\mathbf{Z}}(\mathbf{z}')k(\mathbf{z},\mathbf{z}')d\mathbf{z}d\mathbf{z}'$.
    Then, we can get the following lower bound:
    \begin{align*}
        \sum_{i=0}^3\text{P}_i
        &\geq \text{Pr}\bigg\{\bigg|\frac{1}{N}{\sum_{\mathbf{x}{\in}\mathcal{D}_\mathbf{x}}}\xi_{\mathbf{x}}
        {-}{\int_{\Omega_{\mathbf{x}}}}\xi_{\mathbf{x}}p_{\mathbf{X}}(\mathbf{x})d\mathbf{x}\bigg|\geq t \quad\cup \nonumber\\
        &\quad\quad \bigg|\frac{1}{N(N{-}1)}\sum_{\substack{\mathbf{z}_i,\mathbf{z}_j{\in}\mathcal{D}_\mathbf{z}\\j\neq i}}
        k_{i,j}-D(p_{\mathbf{Z}}, p_{\mathbf{Z}})\bigg|\geq s \quad\cup \nonumber\\
        &\quad\quad \bigg|\frac{1}{N(N{-}1)}\sum_{\substack{\mathbf{z}_i,\mathbf{z}_j{\in}\mathcal{D}_\mathbf{z}^f\\j\neq i}}
        k_{i,j}-D(q_{\mathbf{Z}}, q_{\mathbf{Z}})\bigg|\geq u \quad\cup \nonumber\\
        &\quad\quad\bigg|-\frac{2}{N^2}\sum_{\mathbf{z}_i{\in}\mathcal{D}_\mathbf{z}}\sum_{\mathbf{z}_j{\in}\mathcal{D}_\mathbf{z}^f}
        k_{i,j} +2D(p_{\mathbf{Z}}, q_{\mathbf{Z}})\bigg|\geq v\bigg\} \nonumber\\
        &= \text{Pr}\bigg\{\bigg|\frac{1}{N}{\sum_{\mathbf{x}{\in}\mathcal{D}_\mathbf{x}}}\xi_{\mathbf{x}}
        {-}{\int_{\Omega_{\mathbf{x}}}}\xi_{\mathbf{x}}p_{\mathbf{X}}(\mathbf{x})d\mathbf{x}\bigg|\geq t \quad\cup \nonumber\\
        &\quad\quad \lambda\bigg|\frac{1}{N(N{-}1)}\sum_{\substack{\mathbf{z}_i,\mathbf{z}_j{\in}\mathcal{D}_\mathbf{z}\\j\neq i}}
        k_{i,j}-D(p_{\mathbf{Z}}, p_{\mathbf{Z}})\bigg|\geq \lambda s \quad\cup \nonumber\\
        &\quad\quad \lambda\bigg|\frac{1}{N(N{-}1)}\sum_{\substack{\mathbf{z}_i,\mathbf{z}_j{\in}\mathcal{D}_\mathbf{z}^f\\j\neq i}}
        k_{i,j}-D(q_{\mathbf{Z}}, q_{\mathbf{Z}})\bigg|\geq \lambda u \quad\cup \nonumber\\
        &\quad\quad \lambda\bigg|-\frac{2}{N^2}\sum_{\mathbf{z}_i{\in}\mathcal{D}_\mathbf{z}}\sum_{\mathbf{z}_j{\in}\mathcal{D}_\mathbf{z}^f}
        k_{i,j} +2D(p_{\mathbf{Z}}, q_{\mathbf{Z}})\bigg|\geq \lambda v\bigg\} \nonumber\\       
        &\geq \text{Pr}\bigg\{\bigg|\frac{1}{N}{\sum_{\mathbf{x}{\in}\mathcal{D}_\mathbf{x}}}\xi_{\mathbf{x}}
        {-}{\int_{\Omega_{\mathbf{x}}}}\xi_{\mathbf{x}}p_{\mathbf{X}}(\mathbf{x})d\mathbf{x} + \lambda\frac{1}{N(N{-}1)}\sum_{\substack{\mathbf{z}_i,\mathbf{z}_j{\in}\mathcal{D}_\mathbf{z}\\j\neq i}}
        k_{i,j} \nonumber\\
        &\quad\quad-\lambda D(p_{\mathbf{Z}}, p_{\mathbf{Z}}) +  \frac{1}{N(N{-}1)}\lambda\sum_{\substack{\mathbf{z}_i,\mathbf{z}_j{\in}\mathcal{D}_\mathbf{z}^f\\j\neq i}}
        k_{i,j} \nonumber\\
        &\quad\quad-\lambda D(q_{\mathbf{Z}}, q_{\mathbf{Z}}) -\frac{2}{N^2}\lambda\sum_{\mathbf{z}_i{\in}\mathcal{D}_\mathbf{z}}\sum_{\mathbf{z}_j{\in}\mathcal{D}_\mathbf{z}^f}
        k_{i,j} \nonumber\\
        &\quad\quad+2\lambda D(p_{\mathbf{Z}}, q_{\mathbf{Z}})\bigg|\geq t+\lambda(s+u+v)\bigg\} \nonumber\\        
        &\geq \text{Pr}\bigg\{\bigg|\frac{1}{N}{\sum_{\mathbf{x}{\in}\mathcal{D}_\mathbf{x}}}\xi_{\mathbf{x}}
        +\lambda\bigg[
            \frac{1}{N(N{-}1)}\sum_{\substack{\mathbf{z}_i,\mathbf{z}_j{\in}\mathcal{D}_\mathbf{z}\\j\neq i}}
            k_{i,j}\nonumber\\
            &\quad\quad \frac{1}{N(N{-}1)}\sum_{\substack{\mathbf{z}_i,\mathbf{z}_j{\in}\mathcal{D}_\mathbf{z}^f\\j\neq i}}
            k_{i,j} 
            -\frac{2}{N^2}\sum_{\mathbf{z}_i{\in}\mathcal{D}_\mathbf{z}}\sum_{\mathbf{z}_j{\in}\mathcal{D}_\mathbf{z}^f}
            k_{i,j}
        \bigg]\nonumber\\
        &\quad\quad {-}{\int_{\Omega_{\mathbf{x}}}}\xi_{\mathbf{x}}p_{\mathbf{X}}(\mathbf{x})d\mathbf{x} -\lambda\bigg[
            D(p_{\mathbf{Z}}, p_{\mathbf{Z}})+ D(q_{\mathbf{Z}}, q_{\mathbf{Z}}) + \nonumber\\
            &\quad\quad-2D(p_{\mathbf{Z}}, q_{\mathbf{Z}})
            \bigg]
        \bigg| \geq t+\lambda(s+u+v)\bigg\} \nonumber\\
        &=\text{Pr}\bigg\{\bigg|\hat{\mathcal{L}}
        {-}{\int_{\Omega_{\mathbf{x}}}}\xi_{\mathbf{x}}p_{\mathbf{X}}(\mathbf{x})d\mathbf{x}\nonumber\\
            &\quad\quad
            -\lambda\bigg[
            \int_{\Omega_{\mathbf{z}}}\int_{\Omega_{\mathbf{z}}}
            (p_{\mathbf{Z}}(\mathbf{z})-q_{\mathbf{Z}}(\mathbf{z}))
            (p_{\mathbf{Z}}(\mathbf{z}')\nonumber\\
            &\quad\quad
            -q_{\mathbf{Z}}(\mathbf{z}'))
            k(\mathbf{z},\mathbf{z}')d\mathbf{z}d\mathbf{z}'
            \bigg]\bigg| \geq t+\lambda(s+u+v)\bigg\} \nonumber\\
        &=\text{Pr}\bigg\{\bigg|\hat{\mathcal{L}}
        {-}\mathcal{L}\bigg| \geq t+\lambda(s+u+v)\bigg\}
        \label{eq:finite}
    \end{align*}
    where the first inequality is obtained by applying the union bound. Finally, by exploiting also the results in~(\ref{eq:reconbound}),~(\ref{eq:prob})
    we get the desired bound.
\end{proof}
Theorem~\ref{th:bound} provides a probabilistic bound on the estimation error between $\hat{\mathcal{L}}(f,g)$ and $\mathcal{L}(f,g)$. 
The bound consists of four terms which vanish when $N$ is large. It is important to mention that, while the last three
terms can be made arbitrarily small, by choosing appropriate values for $s,u,v$ and $\lambda$, the first term depends mainly on 
on the value of $\xi$, which can be controlled by modifying the capacity of the encoding and the decoding networks. Therefore, \textbf{we can improve the generalization performance of the model by controlling the capacity of the networks as long as $\xi$ can be made small}. This is confirmed also in practice, as shown in the experimental section.

{\section{Related Work}
\label{sec:related}}
The most promising research directions for implicit generative models are generative adversarial networks (GANs) and autoencoder-based models.

GANs~\cite{goodfellow2014generative} cast the problem of density estimation
as a mini-max game between two neural networks, namely a discriminator, that tries to distinguish
between true and generated samples, and a generator, that tries to produce samples similar to the
true ones, to fool the discriminator. GANs are notoriously difficult to train, usually requiring careful design strategies for network architectures in~\cite{radford2015unsupervised}. Some of 
the most known issues are (i) the problem of vanishing gradients in~\cite{arjovsky2017towards}, which happens
when the output of the discriminator is saturated, because true and generated data are perfectly classified, 
and no more gradient information is provided to the generator, (ii) the problem of mode collapse in~\cite{metz2017unrolled}, 
which happens when the samples from the generator collapse to a single point corresponding to the maximum
output value of the discriminator, and (iii) the problem of instability associated with the failure of
convergence, which is due to the intrinsic nature of the mini-max problem.
Different line of works~\cite{goodfellow2014generative}~\cite{salimans2016improved}~\cite{durugkar2017generative}~\cite{mescheder2017numerics}~\cite{metz2017unrolled}~\cite{karras2017progressive}, have proposed effective solutions to overcome the aforementioned issues with GANs. However, all these strategies have either poor theoretical motivation or they are guaranteed to converge only locally.

Another research direction for GANs consists on using integral probability metrics~\cite{muller1997integral} 
as optimization objective. In particular, the maximum mean discrepancy~\cite{gretton08kernel} can be 
used to measure the distance between $p_{\mathbf{X}}$ and $q_{\mathbf{Y}}$ and train the generator network. 
The general problem is formulated in the following way:
\begin{equation}
    \inf_{g\in\mathcal{G}}\sup_{f\in\mathcal{F}}\big\{E_{\mathbf{x}\sim p_{\mathbf{X}}}[f(\mathbf{x})] 
    - E_{\mathbf{y}\sim q_{\mathbf{Y}}}[f(\mathbf{y})]\big\} \nonumber\\
\end{equation}
In generative moment matching networks~\cite{li2015gmmn,dziugaite2015mmd} $\mathcal{F}$ is a RKHS,
which is induced by the Gaussian kernel. Note that a major limitation of these models is the curse of dimensionality, since the similarity scores associated with the kernel function are directly computed in the sample space, as explained in~\cite{ramdas2015decreasing}. The work of~\cite{li2017mmd} introduces an
encoding function to represent data in a more compact way and distances are computed in the latent
representation, thus solving the problem of dimensionality.
\cite{mroueh2017mcgan} propose to extend the maximum mean discrepancy and include also covariance
statistics to ensure better stability. The work of~\cite{tolstikhin2018wasserstein} generalizes the
computation of the distance between the encoded distribution and the prior to other divergences, thus
proposing two different solutions: the first one consists of using the Jensen-Shannon divergence, showing also the equivalence to adversarial autoencoders, and the second one consists of using the maximum-mean discrepancy. The choice of the kernel function in this second case is of fundamental importance to ensure the global convergence of gradient-descent algorithms. As we have already shown in previous section, suboptimal choices of the kernel function, like the ones used by the authors, introduce local optima
in the function space and therefore do not have the same convergence property of our model. The work by \cite{unterthiner2017coulomb} use Coulomb kernels under the GANs' framework. Nevertheless, the computation of distances is performed directly in the sample space, thus being negatively affected by the curse of dimensionality.

There exists other autoencoder-based models that are inspired by the adversarial game of GANs. 
\cite{chen2016infogan} add an autoencoder network to the original GANs for reconstructing 
part of the latent code. The identical works of~\cite{donahue2017adversarial} and~\cite{dumoulin2017adversarially} 
propose to add an encoding function together with the generator and perform an adversarial game to ensure
that the joint density on the input/output of the generator agrees with the joint density of the input/output
of the encoder. They prove that the optimal solution is achieved when the generator and the encoder
are invertible. In practice, they fail to guarantee the convergence to that solution due to the adversarial
nature of the game. \cite{srivastava2017veegan} extend the previous works by explicitly imposing
the invertibility condition. They achieve this by adding a term to the generator objective that
computes the reconstruction error on the latent space. Adversarial autoencoders by~\cite{makhzani2013adversarial}
are similar to these approaches with the only differences that the estimation of the reconstruction error
is performed in the sample space, while the adversarial game is performed only in the latent space.
It is important to mention that all of these works are based on a mini-max problem, while our method 
solves a simple minimization problem, which behaves better in terms of training convergence.

Variational autoencoders (VAEs) by~\cite{kingma2014vae,rezende2014stochastic} represent another family of autoencoder-based models. The framework is based on minimizing the Kullback-Leibler (KL) divergence between the approximate posterior distribution defined by the encoder and the true prior $p_{\mathbf{Z}}$ 
(which consists of a surrogate for the negative log-likelihood of training data). Practically speaking, 
the stochastic encoder used in variational autoencoders is driven to produce latent representations that 
can be similar among different input samples, thus generating conflicts during reconstruction. 
A deterministic encoder could ideally solve this problem, but unfortunately the KL divergence is not defined 
for such case. There are several variations for VAEs. For example, the work in~\cite{mescheder2017adversarial}
proposes to use the adversarial game of GANs to learn better approximate posterior distributions in VAEs.
Nevertheless, the method is still based on a mini-max problem. Recently, \cite{dai2019diagnosing} propose a training strategy based on a cascade of two VAEs to deal with the limitations implied by the KL divergence. In particular, the authors train a first VAE on the training data and then train a second VAE on the learnt latent representations. This second step is fundamental to improve the matching between the posterior density and the prior with respect to what is done in the first stage. Therefore, this solution implicitly considers the mitigation of local minima from the level of architecture design. However, the 2-stage procedure does not prevent local minima induced by the combination of the two addends in the two objectives.
To the best of our knowledge, only \cite{lucas2019understanding} are aware of the problem of local minima in generative autoencoders. The authors analyze theoretically the behaviour of simple linear VAEs and show that the phenomenon known as posterior collapse\footnote{i.e. the posterior over some latent variables matches the prior, with the consequence that those latent variables ignore encoder inputs.} is due to the problem of local minima (or equivalently local maxima, when considering to maximize the ELBO).

{\section{Experiments}
\label{sec:experiments}}
We evaluate the performance of our model (CouAE) against the baseline of Variational Autoencoders (VAE)~\cite{kingma2014vae,rezende2014stochastic} and  Wasserstein Autoencoders (WAE)~\cite{tolstikhin2018wasserstein}. All experiments are performed on two synthetic datasets, to simulate scenarios with low and high dimensional feature spaces and on a real-world faces' dataset, namely CelebA 64x64.\footnote{We choose $\lambda=100$ for all experiments except the ones on the low-dimensional embedding dataset, in which we use $\lambda=1$ to avoid numerical instabilities.}

We distinguish between two sets of experiments. The first set confirms the usefulness of using the MMD coupled with the Coulomb kernel, while the second one 
aims at validating the generalization error bound in Theorem~\ref{th:bound}.

%
\subsection{Comparison with other autoencoders}
\begin{table}
    \caption{Comparison among different autoencoders on different datasets.}
    \label{tab:comparison}
    \centering
    \begin{tabular}{lcccc}
    \hline
    \textbf{Eval. Metric} & \textbf{Data/Method} & VAE & WAE & CouAE \\
    \hline
    Test Log-likel. & Grid & -4.4$\pm$0.2 & -6.4$\pm$1.1 & \textbf{-4.3$\pm$0.1}
    \\
    FID & CelebA & 63 & 55 & \textbf{47}
    \\
    \hline
    \end{tabular}
\end{table}
\begin{figure}
    \centering
        \subfigure[True Data]{\includegraphics[width=0.24\linewidth]{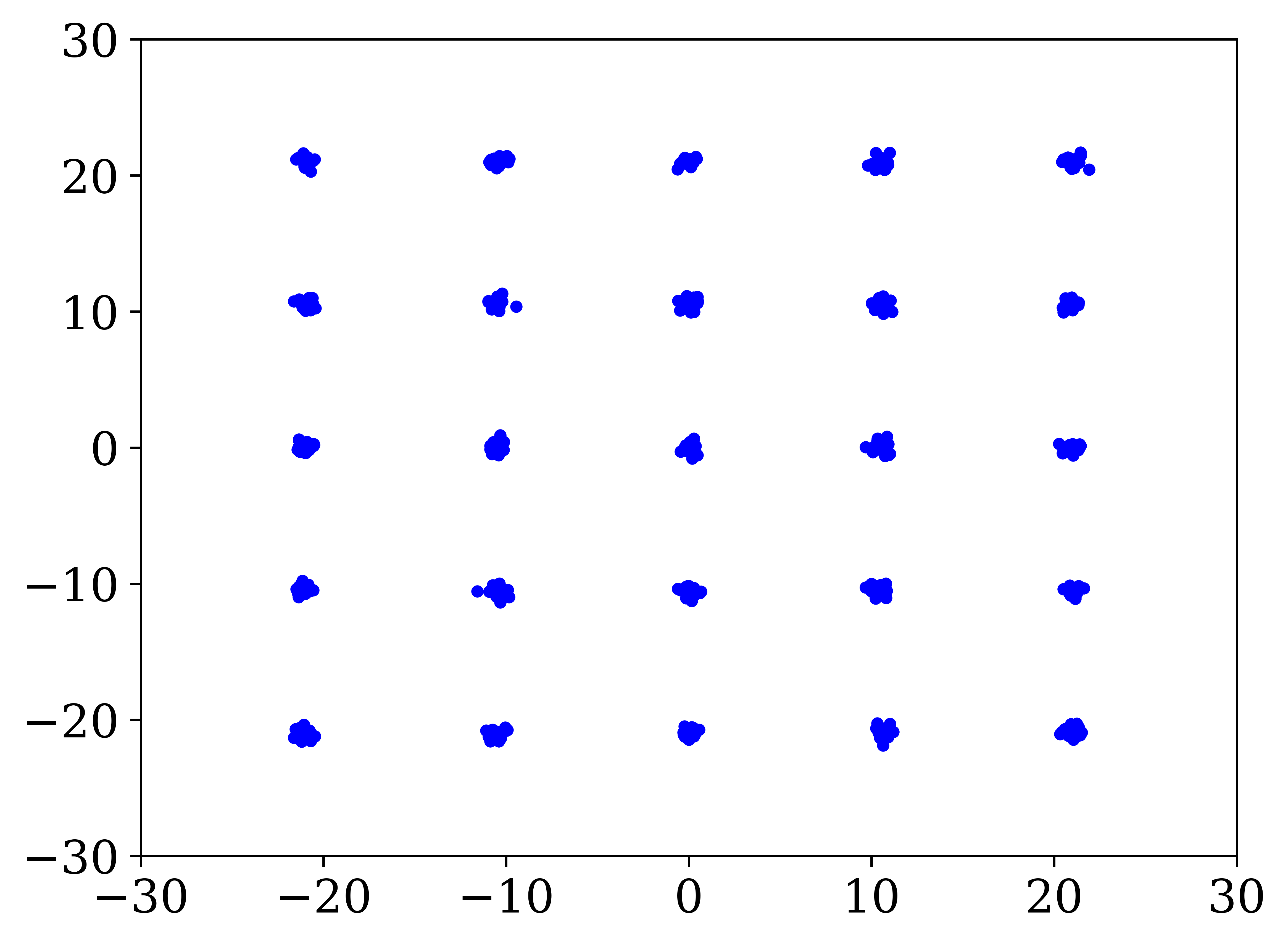}}
        \subfigure[VAE]{\includegraphics[width=0.24\linewidth]{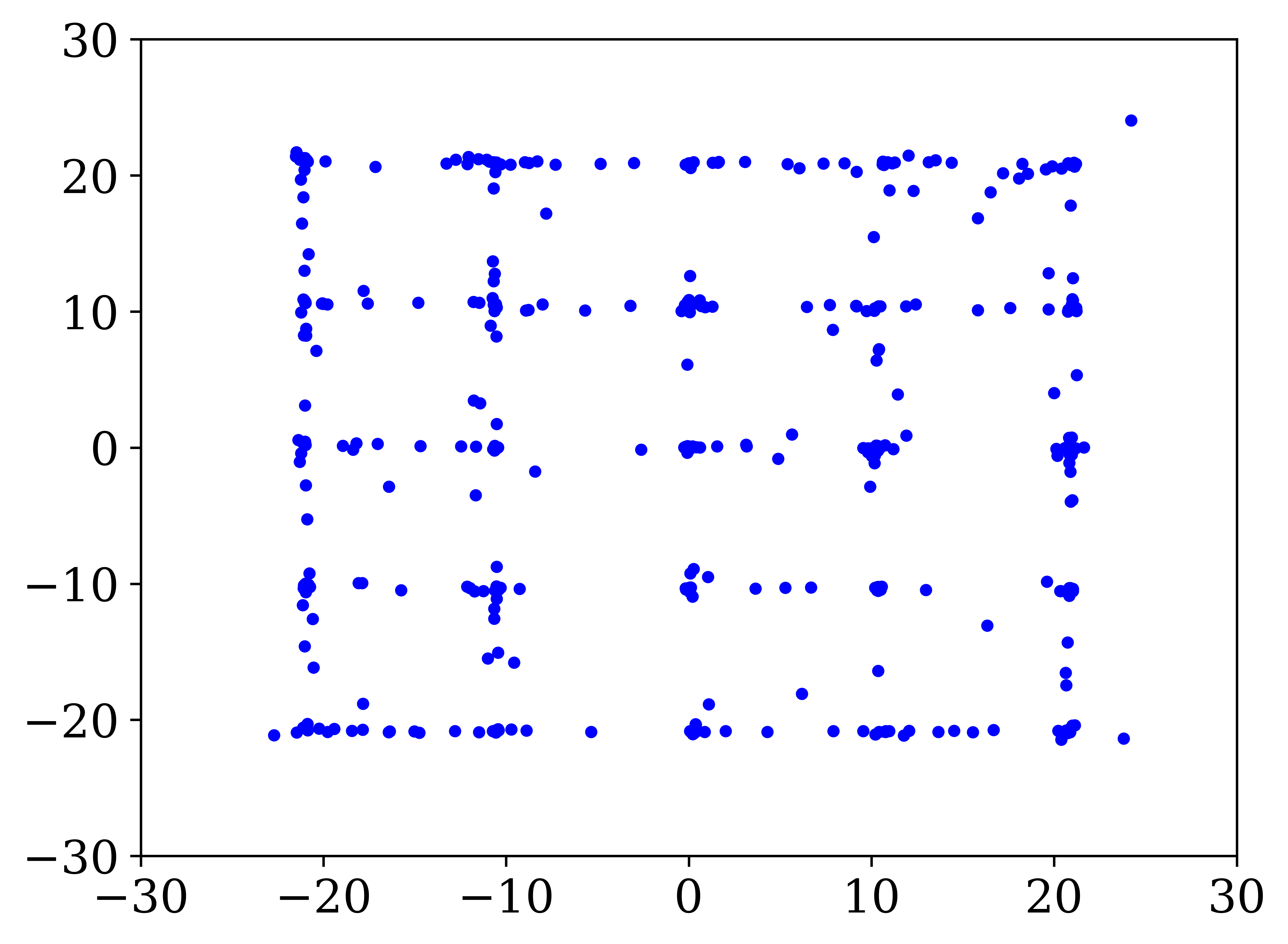}}
        \subfigure[WAE]{\includegraphics[width=0.24\linewidth]{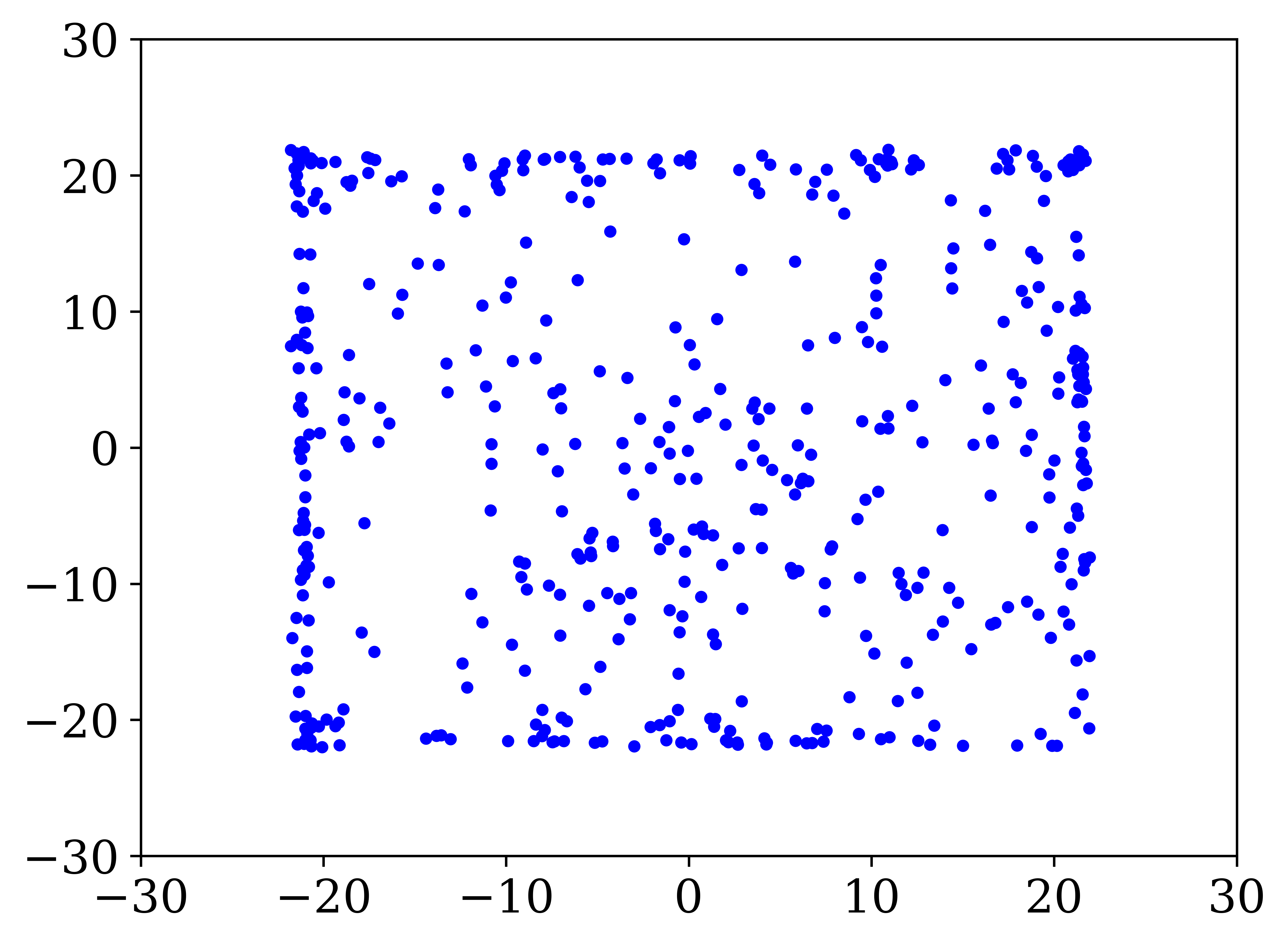}}
        \subfigure[CouAE]{\includegraphics[width=0.24\linewidth]{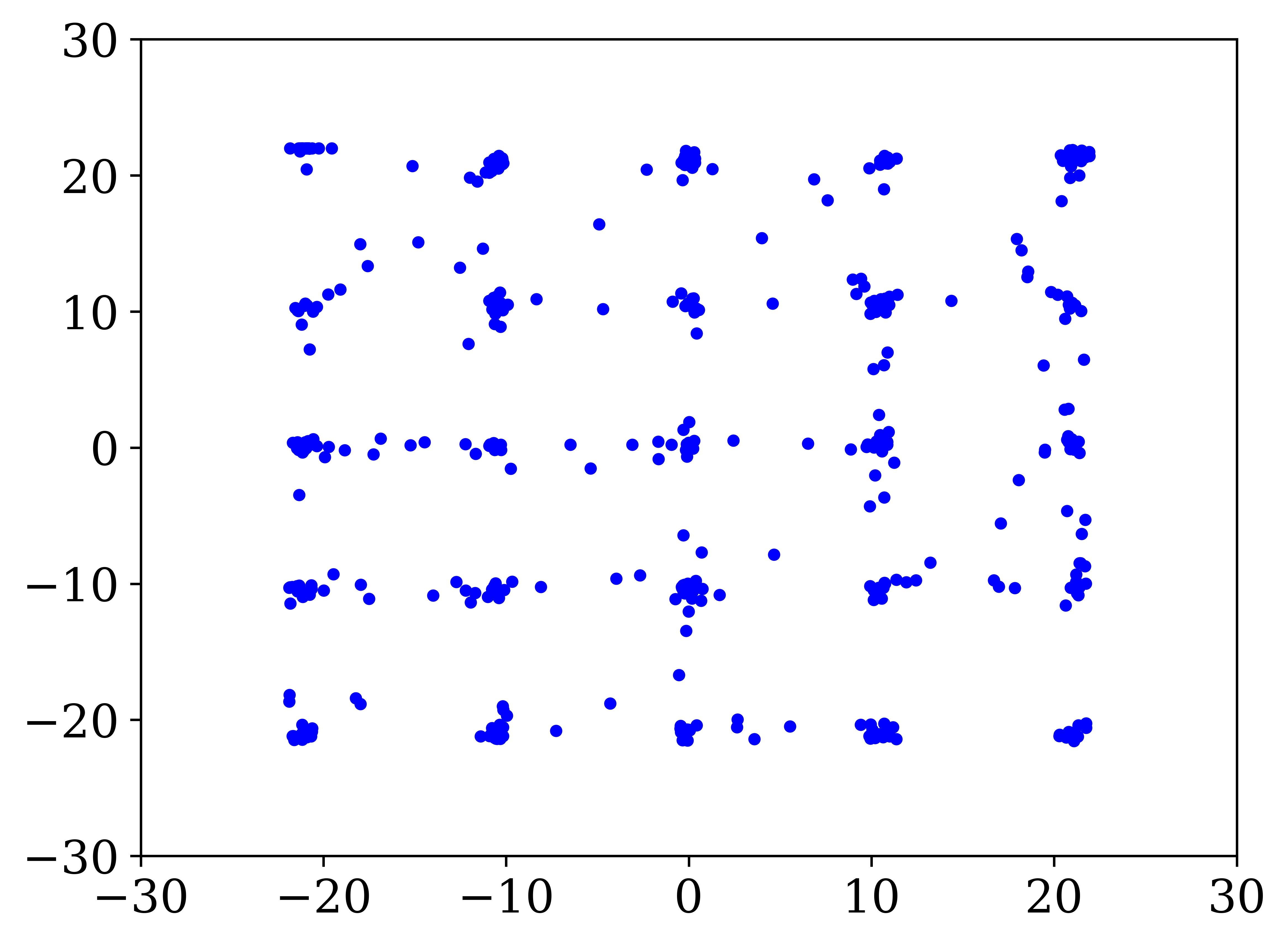}}
    \caption{Generated data from different models on grid dataset.}
    \label{fig:25modes}
\end{figure}
\begin{figure}
    \centering
        \subfigure[VAE]{\includegraphics[trim=130 130 130 130,clip,width=0.31\linewidth]{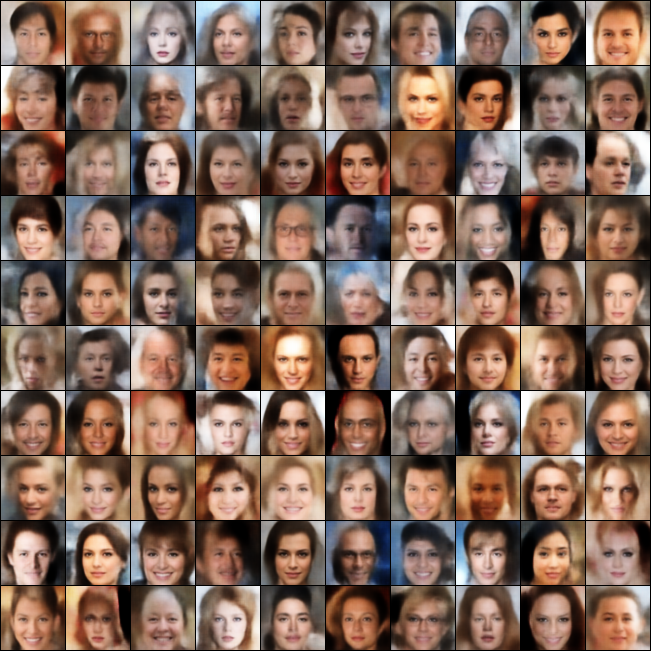}}
        \subfigure[WAE]{\includegraphics[trim=130 130 130 130,clip,width=0.31\linewidth]{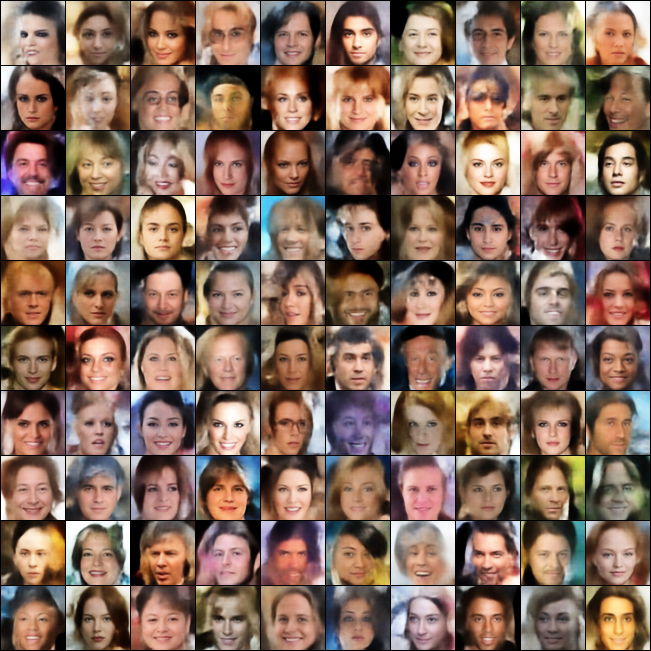}}
        \subfigure[CouAE]{\includegraphics[trim=12 12 12 12,clip,width=0.31\linewidth]{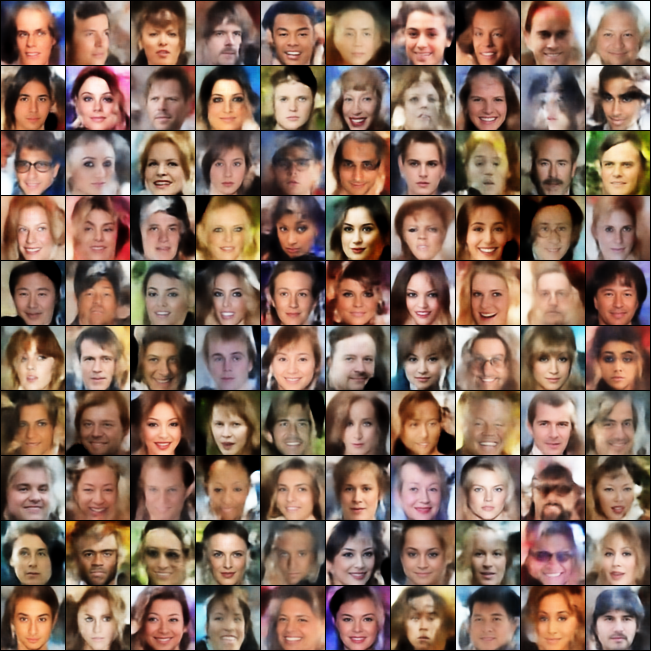}}
        \caption{Generated samples on CelebA.}
    \label{fig:celeba}
\end{figure}
We start by comparing the approaches on a two-dimensional dataset consisting of 25 isotropic Gaussians placed according to a grid (see Figure~\ref{fig:25modes}(a)), hereafter called the grid dataset~\cite{lim2017geometric}. 
The training dataset contains $500$ samples generated from the true density.

Following the methodology of other works (see for example~\cite{lim2017geometric,unterthiner2017coulomb}, we choose fully connected Multilayer Perceptrons with two hidden layers (128 neurons each) at both the encoder and the decoder and set $h=2$. All models are trained for $3.10^6$ iterations using Adam optimizer with learning rate $10^{-3}$. Models are evaluated qualitatively by visually inspecting generated samples and quantitatively by computing the log-likelihood on test data. To compute the log-likelihood, we first apply kernel density estimation using a Gaussian kernel on $10^4$ generated samples\footnote{Bandwidth is selected from a set of $10$ values logarithmically spaced in $[10^{-3},10^{1.5}]$.}
and then evaluate the log-likelihood on $10^4$ test samples from the true distribution. Results are averaged over $10$ repetitions.

Figure~\ref{fig:25modes} shows samples generated by all models, while Table~\ref{tab:comparison} provides
quantitative results in terms of test log-likelihood. These experiments highlight the fact that an improper choice of the kernel function may lead to worse performance. In fact, note that WAE does not perform as good as our proposed solution.


For the experiments on CelebA, we follow the settings used in~\cite{tolstikhin2018wasserstein}. In particular, we choose a DCGAN architecture~\cite{radford2015unsupervised} and train all models for $10^5$ iterations with a learning rate of $0.0005$.\footnote{Similarly to the low-dimensional embedding dataset, we choose $\beta=2$.} For the competitors, we run the simulations using the implementation of~\cite{radford2015unsupervised}. Figure~\ref{fig:celeba} shows samples generated by all models, while Table~\ref{tab:comparison} provides
quantitative results in terms of test FID~\cite{heusel2017gans}. These experiments confirm the findings observed on the grid and the low-dimensional embedding datasets, namely that the choice of using the MMD coupled with the Coulomb kernel provides significant improvements over VAEs and WAEs.

\subsection{Validation of generalization bound}
\begin{table}
    \caption{Experimental validation of generalization bound for CouAE.
    A scaling factor is applied to the number of neurons in each hidden layer to control the capacity of the encoder and the decoder.}
    \label{tab:capacity}
    \centering
    \resizebox{\linewidth}{!}{
    \begin{tabular}{lcccc}
    \hline
    \textbf{Eval. Metric} & \textbf{Data/Width factor} & $\times0.25$ & $\times0.5$ & $\times1$ \\
    \hline
    Test Log-likel. & Grid & -5.8$\pm$0.4 & -4.8$\pm$0.4 & -4.3$\pm$0.1 \\
    \hline
    \textbf{Eval. Metric} & \textbf{Data/Width factor} & $\times0.25$ & $\times0.5$ & $\times1$ \\
    \hline
    FID & CelebA & 53 & 51 & 47
    \\
    \hline
    \end{tabular}}
\end{table}
\begin{figure}
    \centering
        \subfigure[$\times0.25$]{\includegraphics[trim=35 0 6 0,clip,width=0.4\linewidth]{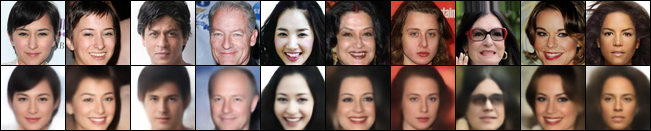}}
        \subfigure[$\times0.5$]{\includegraphics[trim=35 0 6 0,clip,width=0.4\linewidth]{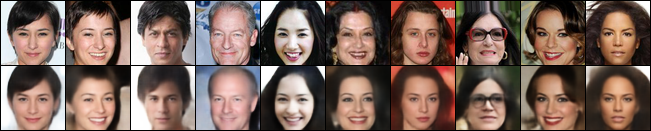}}\\
        \subfigure[$\times1$]{\includegraphics[trim=35 0 6 0,clip,width=0.4\linewidth]{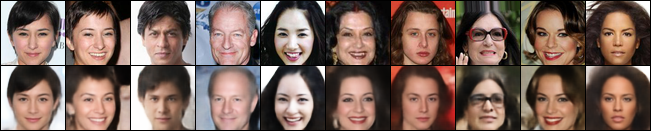}}
        \caption{Reconstruction of test images for different width factors. The top and the bottom row of each case contains the original and reconstructed test images.}
    \label{fig:recon}
\end{figure}
To validate the properties of the generalization bound in Theorem~\ref{th:bound}, we perform experiments on the same datasets used in the previous set of experiments and analyze the performance of CouAE as the capacity of the encoder and the decoder networks changes. In particular, we use the total number of hidden neurons as a proxy to measure the capacity of the model and vary this number according to different scaling factors. Table~\ref{tab:capacity} provides a quantitative analysis of the generalization performance of CouAE. In all cases, we see that an increase of capacity translates into an improvement of the performance. Nevertheless, it is important to mention that there is a limit on the growth of the networks' capacity. As suggested by Theorem~\ref{th:bound}, the growth is mainly limited by $\xi$, which could be estimated averaging the reconstruction error on the train and the validation data. In fact, there is no additional benefit to consider larger networks, once $\xi$ is at its minimum, as the bound in Theorem~\ref{th:bound} is dominated by the MMD term, which can be improved only by increasing the number of samples. Figure~\ref{fig:recon} provides a more qualitative analysis on the relation between generalization and reconstruction error. In particular, we visualize some reconstructed test images from the model and see that an increase of the network capacity allows to capture more details about the original images.

It is important to mention that there are also other architectural factors, which may affect the estimation of $\xi$, and which is worth considering in future research. Some examples are the depth of the networks and the use of residual connections (to mitigate the problem of local minima~\cite{li2018visualizing}).

{\section{Conclusions}
\label{sec:conclusions}}
In this work, we have proposed new theoretical insights on MMD-based autoencoders. In particular, (i) we have proved that MMD coupled with Coulomb kernels has convergence properties similar to convex functionals and shown that these properties have also an impact on the performance of autoencoders, and (ii) we have provided a probabilistic bound on the generalization performance and given principled insights on it.

\bibliographystyle{ecai}
\bibliography{ecai}

\appendix
%
%
\section{Properties of regularizer in (1)}

    \begin{lemma}
        Given $k:\Omega_{\mathbf{z}}\times\Omega_{\mathbf{z}}\rightarrow\mathbb{R}$ a symmetric positive
        definite kernel, then:
        \begin{itemize}
            \item[(a)]~\cite{aronszajn1950theory} there exists a unique Hilbert space $\mathcal{H}$ of 
            real-valued functions over $\Omega_{\mathbf{z}}$, for which $k$ is
            a reproducing kernel. $\mathcal{H}$ is therefore a Reproducing kernel
            Hilbert Space (RKHS).
            \item[(b)] For all $\ell\in\mathcal{H}$
            $$\int_{\Omega_{\mathbf{z}}}\int_{\Omega_{\mathbf{z}}}
            \phi_t(\mathbf{z})\phi_t(\mathbf{z}')k(\mathbf{z}{,}\mathbf{z}')d\mathbf{z}d\mathbf{z}'=\text{MMD}^2(p_{\mathbf{Z}},q_{\mathbf{Z}})$$
            where
            $$\text{MMD}(p_{\mathbf{Z}},q_{\mathbf{Z}})\doteq\sup_{\|\ell\|_\mathcal{H}\leq 1}\big\{E_{\mathbf{z}\sim p_{\mathbf{Z}}}[\ell(\mathbf{z})] 
            - E_{\mathbf{z}\sim q_{\mathbf{Z}}}[\ell(\mathbf{z})]\big\}$$
            is the maximum mean discrepancy between $p_{\mathbf{Z}}(\mathbf{z})$ and $q_{\mathbf{Z}}(\mathbf{z})$.
            \item[(c)]~\cite{gretton08kernel} Let $\mathcal{H}$ be defined as in (b), then $\text{MMD}(p_{\mathbf{Z}},q_{\mathbf{Z}})=0$ if and only if
            $p_{\mathbf{Z}}(\mathbf{z})=q_{\mathbf{Z}}(\mathbf{z})$.
        \end{itemize}
    \end{lemma}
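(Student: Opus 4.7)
Parts (a) and (c) of the lemma are invoked directly from the cited references (Moore--Aronszajn and Gretton et al.), so the work really lies in part (b), which expresses the double integral regularizer in~(\ref{eq:objective}) as the squared MMD. My plan is to introduce the \emph{kernel mean embedding} of a distribution and then unfold the squared norm of the difference of embeddings to recover the double integral. Concretely, for a probability density $r$ on $\Omega_{\mathbf{z}}$, define $\mu_r \doteq \int_{\Omega_{\mathbf{z}}} k(\cdot,\mathbf{z})\, r(\mathbf{z})\, d\mathbf{z} \in \mathcal{H}$, interpreted as a Bochner integral. The reproducing property $\ell(\mathbf{z}) = \langle \ell, k(\cdot,\mathbf{z}) \rangle_{\mathcal{H}}$ then gives $E_{\mathbf{z}\sim r}[\ell(\mathbf{z})] = \langle \ell, \mu_r \rangle_{\mathcal{H}}$, by pulling the expectation inside the inner product.

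With this in hand, the definition of MMD becomes
\begin{equation*}
\text{MMD}(p_{\mathbf{Z}},q_{\mathbf{Z}}) = \sup_{\|\ell\|_{\mathcal{H}}\leq 1} \langle \ell, \mu_{p_{\mathbf{Z}}} - \mu_{q_{\mathbf{Z}}} \rangle_{\mathcal{H}} = \|\mu_{p_{\mathbf{Z}}} - \mu_{q_{\mathbf{Z}}}\|_{\mathcal{H}},
\end{equation*}
where the second equality is the Riesz representation theorem (or equivalently, Cauchy--Schwarz with the attaining function $\ell^\star = (\mu_{p_{\mathbf{Z}}}-\mu_{q_{\mathbf{Z}}})/\|\mu_{p_{\mathbf{Z}}}-\mu_{q_{\mathbf{Z}}}\|_{\mathcal{H}}$). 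Squaring and expanding yields
\begin{equation*}
\|\mu_{p_{\mathbf{Z}}} - \mu_{q_{\mathbf{Z}}}\|_{\mathcal{H}}^2 = \langle \mu_{p_{\mathbf{Z}}}, \mu_{p_{\mathbf{Z}}} \rangle - 2\langle \mu_{p_{\mathbf{Z}}}, \mu_{q_{\mathbf{Z}}} \rangle + \langle \mu_{q_{\mathbf{Z}}}, \mu_{q_{\mathbf{Z}}} \rangle,
\end{equation*}
and each inner product, by linearity and a second use of the reproducing property, rewrites as $\langle \mu_r, \mu_s \rangle_{\mathcal{H}} = \int\int k(\mathbf{z},\mathbf{z}')\, r(\mathbf{z})s(\mathbf{z}')\, d\mathbf{z} d\mathbf{z}'$. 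Collecting the three pieces with the correct signs reassembles exactly $\int\int \phi(\mathbf{z})\phi(\mathbf{z}') k(\mathbf{z},\mathbf{z}')\, d\mathbf{z} d\mathbf{z}'$ with $\phi = p_{\mathbf{Z}} - q_{\mathbf{Z}}$, completing (b).

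The main obstacle is purely a measure-theoretic one: justifying that $\mu_r$ is a well-defined element of $\mathcal{H}$ and that the exchange of Bochner integral and inner product is legitimate. A standard sufficient condition is $\int \sqrt{k(\mathbf{z},\mathbf{z})}\, r(\mathbf{z})\, d\mathbf{z} < \infty$, which holds trivially when $k$ is bounded on $\Omega_{\mathbf{z}}$ (as in the hypotheses used for Theorem~\ref{th:bound}) and also holds for the Coulomb kernels on a compact $\Omega_{\mathbf{z}}$ avoiding the diagonal singularity. Once this integrability is in place, Fubini supplies the swap of integrals needed to go from $\langle \mu_r, \mu_s \rangle$ to the double integral against $k$, and the argument closes. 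Part (c) then follows from (b) together with the fact that, for a characteristic kernel such as Coulomb, the mean embedding map $r \mapsto \mu_r$ is injective, which is precisely the cited result of~\cite{gretton08kernel}.
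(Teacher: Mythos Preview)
Your proof is correct and follows essentially the same route as the paper's: both introduce the kernel mean embeddings $\mu_{p_{\mathbf{Z}}},\mu_{q_{\mathbf{Z}}}$, identify $\text{MMD}$ with $\|\mu_{p_{\mathbf{Z}}}-\mu_{q_{\mathbf{Z}}}\|_{\mathcal{H}}$ via the unit-ball/Cauchy--Schwarz duality, and expand the squared norm into the four double integrals, the only cosmetic difference being that the paper starts from the integral $J$ and works toward $\text{MMD}$ while you run the chain in the opposite direction. One small caveat on your measure-theoretic aside: the sufficient condition $\int\sqrt{k(\mathbf{z},\mathbf{z})}\,r(\mathbf{z})\,d\mathbf{z}<\infty$ does \emph{not} hold for Coulomb kernels (there $k(\mathbf{z},\mathbf{z})=\infty$); the paper instead assumes the weaker condition $E_{\mathbf{z},\mathbf{z}'\sim r}[k(\mathbf{z},\mathbf{z}')]<\infty$ in a footnote, which is what you actually need to ensure $\mu_r\in\mathcal{H}$.
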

    \begin{proof}
        (a) follows directly from the Moore-Aronszajn theorem~\cite{aronszajn1950theory}. 
        
        Now we prove statement (b).
        For the sake of notation compactness, define $J\doteq\int_{\Omega_{\mathbf{z}}}\int_{\Omega_{\mathbf{z}}}
        \phi_t(\mathbf{z})\phi_t(\mathbf{z}')k(\mathbf{z}{,}\mathbf{z}')d\mathbf{z}d\mathbf{z}'$. Therefore,
        \begin{align}
            J
            &=
            \int_{\Omega_{\mathbf{z}}}\int_{\Omega_{\mathbf{z}}}
            p_{\mathbf{Z}}(\mathbf{z})p_{\mathbf{Z}}(\mathbf{z}')k(\mathbf{z}{,}\mathbf{z}')d\mathbf{z}d\mathbf{z}' -\nonumber\\
            &\quad - \int_{\Omega_{\mathbf{z}}}\int_{\Omega_{\mathbf{z}}}
            p_{\mathbf{Z}}(\mathbf{z})q(\mathbf{z}')k(\mathbf{z}{,}\mathbf{z}')d\mathbf{z}d\mathbf{z}' -\nonumber\\
            &\quad - \int_{\Omega_{\mathbf{z}}}\int_{\Omega_{\mathbf{z}}}
            p_{\mathbf{Z}}(\mathbf{z}')q_{\mathbf{Z}}(\mathbf{z})k(\mathbf{z}{,}\mathbf{z}')d\mathbf{z}d\mathbf{z}' +\nonumber\\
            &\quad
            + \int_{\Omega_{\mathbf{z}}}\int_{\Omega_{\mathbf{z}}}
            q(\mathbf{z}')q_{\mathbf{Z}}(\mathbf{z})k(\mathbf{z}{,}\mathbf{z}')d\mathbf{z}d\mathbf{z}'\nonumber\\
            &= 
            \int_{\Omega_{\mathbf{z}}}\int_{\Omega_{\mathbf{z}}}
            p_{\mathbf{Z}}(\mathbf{z})p_{\mathbf{Z}}(\mathbf{z}')\langle r(\mathbf{z}),r(\mathbf{z}')\rangle_{\mathcal{H}}d\mathbf{z}d\mathbf{z}' -\nonumber\\
            &\quad
            - \int_{\Omega_{\mathbf{z}}}\int_{\Omega_{\mathbf{z}}}
            p_{\mathbf{Z}}(\mathbf{z})q(\mathbf{z}')\langle r(\mathbf{z}),r(\mathbf{z}')\rangle_{\mathcal{H}}d\mathbf{z}d\mathbf{z}' -\nonumber\\
            &\quad - \int_{\Omega_{\mathbf{z}}}\int_{\Omega_{\mathbf{z}}}
            p_{\mathbf{Z}}(\mathbf{z}')q_{\mathbf{Z}}(\mathbf{z})\langle r(\mathbf{z}),r(\mathbf{z}')\rangle_{\mathcal{H}}d\mathbf{z}d\mathbf{z}' +\nonumber\\
            &\quad+
            \int_{\Omega_{\mathbf{z}}}\int_{\Omega_{\mathbf{z}}}
            q(\mathbf{z}')q_{\mathbf{Z}}(\mathbf{z})\langle r(\mathbf{z}),r(\mathbf{z}')\rangle_{\mathcal{H}}d\mathbf{z}d\mathbf{z}'\nonumber\\
            &= 
            \langle E_{\mathbf{z}\sim p_{\mathbf{Z}}}[r(\mathbf{z})],E_{\mathbf{z}'\sim p_{\mathbf{Z}}}[r(\mathbf{z}')])\rangle_{\mathcal{H}} -\nonumber\\
            &\quad-
            \langle E_{\mathbf{z}\sim p_{\mathbf{Z}}}[r(\mathbf{z})],E_{\mathbf{z}'\sim q}[r(\mathbf{z}')])\rangle_{\mathcal{H}} -\nonumber\\
            &\quad
            -\langle E_{\mathbf{z}'\sim p_{\mathbf{Z}}}[r(\mathbf{z}')],E_{\mathbf{z}\sim q}[r(\mathbf{z})])\rangle_{\mathcal{H}} +\nonumber\\
            &\quad+
            \langle E_{\mathbf{z}'\sim q}[r(\mathbf{z}')],E_{\mathbf{z}\sim q}[r(\mathbf{z})])\rangle_{\mathcal{H}}
            \label{eq:temp}
        \end{align}
        Note that the second equality in~(\ref{eq:temp}) follows from the fact that
        $k(\mathbf{z}{,}\mathbf{z}')=\langle r(\mathbf{z}),r(\mathbf{z}')\rangle_{\mathcal{H}}$ 
        for a unique $r\in\mathcal{H}${,}\footnote{This is a classical result due to the Riesz representation theorem.} where $\langle\cdot{,}\cdot\rangle_{\mathcal{H}}$
        is the inner product of $\mathcal{H}$.
        If we define $\mu_{p_{\mathbf{Z}}}\doteq E_{\mathbf{z}\sim p_{\mathbf{Z}}}[r(\mathbf{z})]$ 
        and $\mu_q\doteq E_{\mathbf{z}\sim q}[r(\mathbf{z})]${,}\footnote{Their existence can be guaranteed assuming that $\|\mu_{p_{\mathbf{Z}}}\|_{\mathcal{H}}^2<\infty$
        and $\|\mu_q\|_{\mathcal{H}}^2<\infty$. In other words, $E_{\mathbf{z}{,}\mathbf{z}'\sim p_{\mathbf{Z}}}[k(\mathbf{z}{,}\mathbf{z}')]<\infty$ and 
        $E_{\mathbf{z}{,}\mathbf{z}'\sim q}[k(\mathbf{z}{,}\mathbf{z}')]<\infty$.} then~(\ref{eq:temp}) can be rewritten in the following way:
        \begin{align}
            J &= 
            \langle \mu_{p_{\mathbf{Z}}} {,}\mu_{p_{\mathbf{Z}}}\rangle_{\mathcal{H}} -
            \langle \mu_{p_{\mathbf{Z}}}{,}\mu_q\rangle_{\mathcal{H}}- 
            \langle \mu_{p_{\mathbf{Z}}}{,}\mu_q\rangle_{\mathcal{H}}+
            \langle \mu_q{,}\mu_q\rangle_{\mathcal{H}} \nonumber\\
            &=
            \langle \mu_{p_{\mathbf{Z}}}-\mu_q{,}\mu_{p_{\mathbf{Z}}}-\mu_q\rangle_{\mathcal{H}}\nonumber\\
            &=
            \|\mu_{p_{\mathbf{Z}}}-\mu_q\|_{\mathcal{H}}^2
            \label{eq:temp2}
        \end{align}
        Notice that 
        \begin{align}
            \|\mu_{p_{\mathbf{Z}}}{-}\mu_q\|_{\mathcal{H}}
            &= 
            \Big\langle \mu_{p_{\mathbf{Z}}}-\mu_q{,}\frac{\mu_{p_{\mathbf{Z}}}-\mu_q}{\|\mu_{p_{\mathbf{Z}}}-\mu_q\|_{\mathcal{H}}}\Big\rangle_{\mathcal{H}} \nonumber\\
            &= 
            \sup_{\|\ell\|_{\mathcal{H}}\leq 1}\Big\{
                \langle \mu_{p_{\mathbf{Z}}}-\mu_q,\ell\rangle_{\mathcal{H}}
            \Big\}\nonumber\\
            &=
            \sup_{\|\ell\|_{\mathcal{H}}\leq 1}\Big\{
                E_{\mathbf{z}\sim p_{\mathbf{Z}}}[\langle r(\mathbf{z}), \ell\rangle_{\mathcal{H}}]-
                E_{\mathbf{z}\sim q}[\langle r(\mathbf{z}), \ell\rangle_{\mathcal{H}}]         
            \Big\}\nonumber\\
            &=
            \sup_{\|\ell\|_\mathcal{H}\leq 1}\big\{E_{\mathbf{z}\sim p_{\mathbf{Z}}}[\ell(\mathbf{z})] 
            - E_{\mathbf{z}\sim q}[\ell(\mathbf{z})]\big\} \nonumber\\
            &=\text{MMD}(p_{\mathbf{Z}},q_{\mathbf{Z}})\nonumber
        \end{align}
        Substituting this result into~(\ref{eq:temp2}) concludes the proof of the statement.

        Statement (c) is equivalent to Theorem 3 in~\cite{gretton08kernel}.
    \end{proof}

%
%
\section{Sufficient conditions for global convergence}
This section aims at clarifying the theory proposed in~\cite{hochreiter2005optimal}. Note that the authors have focused on proving that the Poisson's equation is necessary for achieving global convergence. Here, we prove that this equation represents a sufficient condition to guarantee the global convergence on the regularizer in~(1), thus motivating the use of Coulomb kernels. The result of the following proposition implies that the loss function associated with the second addend in~(1) is free from saddle points and all local minima are global.
\begin{proposition}
    Assume that the encoder network has enough capacity to
    achieve the global minimum of the second addend in~(1). Furthermore, assume that the kernel function
    satisfies the Poisson's equation, viz. $\nabla_{\mathbf{z}}^2 k(\mathbf{z},\mathbf{z}')=-\delta(\mathbf{z}-\mathbf{z}')$.
    Then,
    \begin{align}
        &|\phi_t(\mathbf{z}_{max})|=\frac{1}{\sqrt{2t+(\phi_0(\mathbf{z}_{max}))^{-2}}}
    \end{align}
    where $\phi_t(\cdot)$ represents $\phi(\cdot)$ at iteration $t$, 
    while $\mathbf{z}_{max}=\text{arg}\max_{\mathbf{z}}\|\phi_t(\mathbf{z})\|$.
    Therefore, gradient descent-based training converges to the global minimum of the second addend in~(1)
    and at the global minimum $\phi(\mathbf{z})=0$ for all $\mathbf{z}\in\Omega_{\mathbf{z}}$.
\end{proposition}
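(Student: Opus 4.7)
The plan is to view the regularizer in~(1) as a functional of $q_\mathbf{Z}$, derive its $L^2$ gradient flow, and then collapse the resulting PDE to a scalar ODE for the sup-norm $M(t) \doteq |\phi_t(\mathbf{z}_{\max})|$ by evaluating the flow at the moving maximum. Poisson's equation is precisely what lets one turn a nonlocal convolution into a pointwise differential identity, which is why it is the essential hypothesis here.

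\textbf{First,} I would compute the functional derivative of $J[q_\mathbf{Z}] \doteq \iint \phi(\mathbf{z})\phi(\mathbf{z}') k(\mathbf{z},\mathbf{z}')\, d\mathbf{z}\, d\mathbf{z}'$ with $\phi \doteq p_\mathbf{Z} - q_\mathbf{Z}$. Using the symmetry of $k$, one obtains $\delta J/\delta q_\mathbf{Z}(\mathbf{z}) = -2u(\mathbf{z})$, where $u(\mathbf{z}) \doteq (k*\phi)(\mathbf{z})$, so continuous-time gradient descent on $q_\mathbf{Z}$ gives the evolution $\partial_t \phi = -2u$. Differentiating under the integral and applying $-\nabla_\mathbf{z}^2 k(\mathbf{z},\mathbf{z}') = \delta(\mathbf{z}-\mathbf{z}')$ yields the crucial pointwise identity $-\nabla^2 u = \phi$, so $u$ is the Newtonian potential of the signed charge $\phi$.

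\textbf{Next,} I would invoke an envelope (moving-maximum) argument: since $\nabla \phi_t$ vanishes at $\mathbf{z}_{\max}(t)$, the drift term in $\dot{\mathbf{z}}_{\max}$ disappears and $\dot M(t) = \mathrm{sgn}(\phi_t(\mathbf{z}_{\max}))\,\partial_t\phi(\mathbf{z}_{\max},t) = -2\,\mathrm{sgn}(\phi_t(\mathbf{z}_{\max}))\,u(\mathbf{z}_{\max},t)$. \textbf{The main obstacle sits here:} to reach the stated closed form, one must upgrade this into the autonomous ODE $\dot M = -M^3$, i.e.\ one must show $|u(\mathbf{z}_{\max})| = \tfrac{1}{2} M^3$. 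My strategy is to exploit Poisson's equation locally at the peak, using the Newtonian representation $u(\mathbf{z}_{\max}) = \tfrac{1}{(h{-}2)\mathcal{S}_h}\int |\mathbf{z}_{\max}-\mathbf{z}'|^{-(h-2)}\phi(\mathbf{z}')\,d\mathbf{z}'$ together with the quadratic profile of $\phi$ around its maximum --- whose characteristic width $R$ is controlled by the curvature identity $\nabla^2 u(\mathbf{z}_{\max}) = -M$ --- to obtain $|u(\mathbf{z}_{\max})| \propto M R^2$, and then to argue that the gradient flow enforces the self-similar balance $R \propto M$, producing the cubic scaling. This combination of a Newtonian integral estimate with a self-similarity / curvature balance at the peak is the technically delicate step.

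\textbf{Finally,} once $\dot M = -M^3$ is in hand, separation of variables gives $\tfrac{d}{dt} M^{-2} = 2$, and integrating from $0$ to $t$ yields $M(t)^{-2} = 2t + M(0)^{-2}$, which is exactly the stated formula. Since $M(t) \to 0$ as $t\to\infty$, the signed difference $\phi_t$ vanishes pointwise, so $q_\mathbf{Z} \to p_\mathbf{Z}$ and continuous-time gradient descent on the regularizer in~(1) reaches its global minimum --- certifying the absence of trapping local minima whenever the kernel satisfies Poisson's equation.
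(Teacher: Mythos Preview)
Your proposal has a genuine gap, and it stems from choosing the wrong dynamics at the outset. You treat training as the $L^2$ gradient flow $\partial_t \phi = -2u$ with $u = k*\phi$, but the paper models the training of the encoder as a \emph{transport} process: the encoded particles move with velocity $v(\mathbf{z}) = -\nabla_{\mathbf{z}} J(\mathbf{z})$, where $J(\mathbf{z}) = \phi_t(\mathbf{z})\Psi(\mathbf{z})$ and $\Psi = k*\phi_t$, and the resulting evolution of $\phi_t$ is the continuity equation
\[
\frac{\partial \phi_t}{\partial t} \;=\; -\,\nabla_{\mathbf{z}}\!\cdot\!\bigl(\phi_t(\mathbf{z})\,v(\mathbf{z})\bigr).
\]
Expanding the divergence and evaluating at $\mathbf{z}_{\max}$ (where $\nabla_{\mathbf{z}}\phi_t = 0$) kills every term except $\phi_t(\mathbf{z}_{\max})^2\,\nabla_{\mathbf{z}}^2\Psi(\mathbf{z}_{\max})$. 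Poisson's equation then gives $\nabla_{\mathbf{z}}^2\Psi = -\phi_t$ pointwise, so one obtains $\partial_t\phi_t(\mathbf{z}_{\max}) = -\phi_t(\mathbf{z}_{\max})^3$ \emph{directly}, with no scaling heuristics needed.

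By contrast, under your $L^2$ flow the relation you would need, $|u(\mathbf{z}_{\max})| = \tfrac{1}{2}M^3$, is simply false in general: $u(\mathbf{z}_{\max}) = (k*\phi)(\mathbf{z}_{\max})$ is a nonlocal integral depending on the full profile of $\phi$, and the identity $-\nabla^2 u = \phi$ constrains only the Laplacian of $u$, not its pointwise value. Your ``self-similar balance'' sketch (width $R\propto M$ enforced by the flow) is not a property of the $L^2$ flow and cannot be made rigorous here --- different initial shapes of $\phi$ with the same peak height $M$ produce different $u(\mathbf{z}_{\max})$. The cubic nonlinearity in the paper is generated structurally by the product $\phi_t\,v$ inside the divergence of the continuity equation, which is precisely the piece missing from your dynamics. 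Once you switch to the continuity-equation formulation, your final ODE step (separation of variables on $\dot M = -M^3$) is exactly what the paper does.
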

\begin{proof}
    By defining the potential function at location $\mathbf{z}$, namely $\Psi(\mathbf{z})\doteq\int_{\Omega_{\mathbf{z}}}\phi_t(\mathbf{z}')k(\mathbf{z},\mathbf{z}')d\mathbf{z}'$, the second addend in~(1) can be rewritten in the following way:
    \begin{align}
        \int_{\Omega_{\mathbf{z}}}\int_{\Omega_{\mathbf{z}}}
        \phi(\mathbf{z})\phi(\mathbf{z}')k(\mathbf{z},\mathbf{z}')d\mathbf{z}d\mathbf{z}' &=
        \int_{\Omega_{\mathbf{z}}}\phi_t(\mathbf{z})\Psi(\mathbf{z})d\mathbf{z}\nonumber\\
        &\doteq\int_{\Omega_{\mathbf{z}}}J(\mathbf{z})d\mathbf{z}
        \label{eq:mmd}
    \end{align}
    The overall objective~(\ref{eq:mmd}) is computed by summing the term $J(\mathbf{z})$ for all 
    $\mathbf{z}\in\Omega_{\mathbf{z}}$. Each contribution term consists of the potential function 
    $\Psi(\mathbf{z})$ weighted by $\phi_t(\mathbf{z})$. 
    By considering a single term at specific location $\mathbf{z}$,  viz. $J(\mathbf{z})$, we can
    describe the training dynamics through the continuity equation, namely: 
    \begin{align}
        \frac{\partial \phi_t(\mathbf{z})}{\partial t}=-\nabla_{\mathbf{z}}\cdot\phi_t(\mathbf{z})v(\mathbf{z})
        \label{eq:continuity}
    \end{align}
    where $v(\mathbf{z})=-\nabla_{\mathbf{z}}J(\mathbf{z})$. The equation~(\ref{eq:continuity})
    puts in relation the motion of particle $\mathbf{z}$, moving with speed $v(\mathbf{z})$,
    with the change in $\phi_t(\mathbf{z})$. Therefore,
    \begin{align}
        \frac{\partial \phi_t(\mathbf{z})}{\partial t}&=-\nabla_{\mathbf{z}}{\cdot}\phi_t(\mathbf{z})v(\mathbf{z}) \nonumber\\
                                                    &=\nabla_{\mathbf{z}}{\cdot}\phi_t(\mathbf{z})\nabla_{\mathbf{z}}J(\mathbf{z}) \nonumber\\
                                                    &= \nabla_{\mathbf{z}}\phi_t(\mathbf{z}){\cdot}\nabla_{\mathbf{z}}J(\mathbf{z}) +
                                                        \phi_t(\mathbf{z})\nabla_{\mathbf{z}}{\cdot}\nabla_{\mathbf{z}}J(\mathbf{z}) \nonumber\\
                                                    &= \nabla_{\mathbf{z}}\phi_t(\mathbf{z}){\cdot}\nabla_{\mathbf{z}}J(\mathbf{z}) +
                                                    \phi_t(\mathbf{z})\nabla_{\mathbf{z}}{\cdot}\nabla_{\mathbf{z}}\phi_t(\mathbf{z})\Psi(\mathbf{z}) \nonumber\\
                                                    &= \nabla_{\mathbf{z}}\phi_t(\mathbf{z}){\cdot}\nabla_{\mathbf{z}}J(\mathbf{z}) +
                                                    \phi_t(\mathbf{z})\nabla_{\mathbf{z}}{\cdot}\Big(\nabla_{\mathbf{z}}\phi_t(\mathbf{z})\Big)\Psi(\mathbf{z}) + \nonumber\\
                                                    &\qquad \phi_t(\mathbf{z})\nabla_{\mathbf{z}}{\cdot}\phi_t(\mathbf{z})\nabla_{\mathbf{z}}\Psi(\mathbf{z}) \nonumber\\
                                                    &= \nabla_{\mathbf{z}}\phi_t(\mathbf{z}){\cdot}\nabla_{\mathbf{z}}J(\mathbf{z}) +
                                                    \phi_t(\mathbf{z})\nabla_{\mathbf{z}}{\cdot}\Big(\nabla_{\mathbf{z}}\phi_t(\mathbf{z})\Big)\Psi(\mathbf{z}) + \nonumber\\
                                                    &\qquad \phi_t(\mathbf{z})\nabla_{\mathbf{z}}\phi_t(\mathbf{z}){\cdot}\nabla_{\mathbf{z}}\Psi(\mathbf{z}){+}
                                                    \Big(\phi_t(\mathbf{z})\Big)^2\nabla_{\mathbf{z}}{\cdot}\nabla_{\mathbf{z}}\Psi(\mathbf{z}) \nonumber
    \end{align}
    We can limit our analysis only to maximal points 
    $\mathbf{z}_{max}=\text{arg}\max_{\mathbf{z}}\|\phi_t(\mathbf{z})\|$,
    for which $\nabla_{\mathbf{z}}\phi_t(\mathbf{z}_{max})=0$, in order to
    prove that $\phi_t(\mathbf{z})\rightarrow 0$  for all 
    $\mathbf{z}\in\Omega_{\mathbf{z}}$ as $t\rightarrow\infty$.
    Consequently, previous equation is simplified as follows:
    \begin{align}
        \frac{\partial \phi_t(\mathbf{z}_{max})}{\partial t}&=
                    \Big(\phi_t(\mathbf{z}_{max})\Big)^2\nabla_{\mathbf{z}}{\cdot}\nabla_{\mathbf{z}}\Psi(\mathbf{z}_{max}) \nonumber\\
                &=\Big(\phi_t(\mathbf{z}_{max})\Big)^2\nabla_{\mathbf{z}}^2\Psi(\mathbf{z}_{max}) \nonumber\\
                &=\Big(\phi_t(\mathbf{z}_{max})\Big)^2\nabla_{\mathbf{z}}^2\int_{\Omega_{\mathbf{z}}}\phi_t(\mathbf{z}')k(\mathbf{z}_{max},\mathbf{z}')d\mathbf{z}' \nonumber\\
                &=\Big(\phi_t(\mathbf{z}_{max})\Big)^2\int_{\Omega_{\mathbf{z}}}\phi_t(\mathbf{z}')\nabla_{\mathbf{z}}^2k(\mathbf{z}_{max},\mathbf{z}')d\mathbf{z}' \nonumber\\
                &=-\Big(\phi_t(\mathbf{z}_{max})\Big)^2\int_{\Omega_{\mathbf{z}}}\phi_t(\mathbf{z}')\delta(\mathbf{z}_{max}-\mathbf{z}')d\mathbf{z}' \nonumber\\
                &=-\Big(\phi_t(\mathbf{z}_{max})\Big)^2\int_{\Omega_{\mathbf{z}}}\phi_t(\mathbf{z}')\delta(\mathbf{z}'-\mathbf{z}_{max})d\mathbf{z}' \nonumber\\
                &=-\Big(\phi_t(\mathbf{z}_{max})\Big)^2\phi_t(\mathbf{z}_{max}) \nonumber\\
                &=-\Big(\phi_t(\mathbf{z}_{max})\Big)^3 \nonumber
    \end{align}
    The solution of this differential equation is given by:
    \begin{align}
        \phi_t(\mathbf{z}_{max})=\pm\frac{1}{\sqrt{2t+c}}
    \end{align}
    for a given $c\in\mathbb{R}^+$. Note that
    $\phi_0(\mathbf{z}_{max})=\pm\frac{1}{\sqrt{c}}$. Therefore, $c=(\phi_0(\mathbf{z}_{max}))^{-2}$.
    At the end of training, namely $t\rightarrow\infty$, $\phi(\mathbf{z})=0$ for all
    $\mathbf{z}\in\Omega_\mathbf{z}$, consequently,
    $p_{\mathbf{Z}}(\mathbf{z})=q_{\mathbf{Z}}(\mathbf{z})$ and
    the objective is at its global minimum. This concludes the proof.
\end{proof}

%
%
\section{Solution of the Poisson's equation}
\begin{proof}
    It is important to mention that the solution of the Poisson equation 
    is an already known mathematical result. Nonetheless, we provide here 
    its derivation, since we believe that this can provide useful support for
    the reading of the article.
    Recall that for a given $\mathbf{z}'\in\mathbb{R}^h$
    \begin{equation}
        -\nabla_\mathbf{z}k(\mathbf{z},\mathbf{z}')=\delta(\mathbf{z}-\mathbf{z}'),\quad 
        \forall \mathbf{z}\in\mathbb{R}^h
        \label{eq:poisson}
    \end{equation}
    is the Poisson equation. Note that we are looking for kernel functions 
    that are translation invariant, namely satisfying the property 
    $k(\mathbf{z},\mathbf{z}')=\bar{k}(\mathbf{z}-\mathbf{z}')$. Therefore, 
    the solution of~(\ref{eq:poisson}) can be obtained (i) by considering
    the simplified case where $\mathbf{z}'=\mathbf{0}$ and then (ii) by
    replacing $\mathbf{z}$ with $\mathbf{z}-\mathbf{z}'$ to get the general
    solution.

    Therefore, our aim is to derive the solution for the following case:
    \begin{equation}
        \nabla_\mathbf{z}\Gamma(\mathbf{z})=\delta(\mathbf{z}),\quad 
        \forall \mathbf{z}\in\mathbb{R}^h
        \label{eq:poisson2}
    \end{equation}
    where $\Gamma(\mathbf{z})\doteq-\bar{k}(\mathbf{z})$.

    Consider that $\forall\mathbf{z}\neq\mathbf{0}$ (\ref{eq:poisson2})
    is equivalent to
    \begin{equation}
        \nabla_\mathbf{z}\Gamma(\mathbf{z})=0
        \label{eq:homo}
    \end{equation}
    Now assume that $\Gamma(\mathbf{z})=v(r)$ for some function $v:\mathbb{R}\rightarrow\mathbb{R}$
    and $r\doteq\|\mathbf{z}\|$. Then, we have that $\forall i=1,\dots,h$
    \begin{align}
        \frac{\partial\Gamma(\mathbf{z})}{\partial z_i} &= \frac{dv(r)}{dr}\frac{\partial r}{\partial z_i}=v'(r)\frac{z_i}{r}  \nonumber\\
        \frac{\partial^2\Gamma(\mathbf{z})}{\partial z_i^2} &= v^{''}(r)\frac{z_i^2}{r^2} + v'(r)\frac{1}{r} - v'(r)\frac{z_i^2}{r^3}
        \label{eq:laplacian}
    \end{align}
    By using~(\ref{eq:homo}) and (\ref{eq:laplacian}), we get the following equation:
    \begin{equation}
        \nabla_\mathbf{z}\Gamma(\mathbf{z})=\sum_{i=1}^h\frac{\partial^2\Gamma(\mathbf{z})}{\partial z_i^2}=v^{''}(r)+\frac{h-1}{r}v'(r)=0 \nonumber
    \end{equation}
    whose solution is given by $v'(r)=b/r^{h-1}$ for any scalar $b\neq 0$. By integrating this solution, we obtain
    that:
    \begin{equation}
        v(r)=\left\{\begin{array}{ll}
            br+c & h=1 \\
            b\ln(r)+c & h=2 \\
            -\frac{b}{(h-2)r^{h-2}}+c & h>2
        \end{array}\right.
    \end{equation}
    and by choosing $c=0$ (without loosing in generality), we get that
    \begin{equation}
        \Gamma(\mathbf{z})=\left\{\begin{array}{ll}
            b\|\mathbf{z}\| & h=1 \\
            b\ln(\|\mathbf{z}\|) & h=2 \\
            -\frac{b}{(h-2)\|\mathbf{z}\|^{h-2}} & h>2
        \end{array}\right.
    \end{equation}
    Note that this is the solution of the homogeneous equation in~(\ref{eq:homo}).
    The solution for the nonhomogeneous case in (\ref{eq:poisson2}) can be obtained
    by applying the fundamental theorem of calculus for $h=1$, the Green's theorem
    for $h=2$ and the Stokes' theorem for general $h$ (we skip here the tedious
    derivation, but this result can be easily checked by consulting any book of vector
    calculus for the Green's function). Therefore,
    \begin{equation}
        \Gamma(\mathbf{z})=\left\{\begin{array}{ll}
            \frac{1}{2}\|\mathbf{z}\| & h=1 \\
            \frac{1}{2\pi}\ln(\|\mathbf{z}\|) & h=2 \\
            -\frac{1}{(h-2)\mathcal{S}_{h}\|\mathbf{z}\|^{h-2}} & h>2
        \end{array}\right.
    \end{equation}
    In other words,
    \begin{equation}
        \bar{k}(\mathbf{z})=\left\{\begin{array}{ll}
            -\frac{1}{2}\|\mathbf{z}\| & h=1 \\
            -\frac{1}{2\pi}\ln(\|\mathbf{z}\|) & h=2 \\
            \frac{1}{(h-2)\mathcal{S}_{h}\|\mathbf{z}\|^{h-2}} & h>2
        \end{array}\right.
    \end{equation}
    and after replacing $\mathbf{z}$ with $\mathbf{z}-\mathbf{z}'$, we obtain
    our final result.
\end{proof}

\end{document}